\documentclass{article}

\usepackage[preprint]{neurips_2026}

\usepackage[utf8]{inputenc} 
\usepackage[T1]{fontenc}    
\usepackage{hyperref}       
\usepackage{url}            
\usepackage{booktabs}       
\usepackage{amsfonts}       
\usepackage{nicefrac}       
\usepackage{microtype}      
\usepackage{xcolor}         
\usepackage{algorithm}
\usepackage{algorithmic}
\usepackage{mathtools}

\hypersetup{colorlinks,allcolors=black}

\usepackage[table]{xcolor}
\usepackage{multirow} 
\usepackage{graphicx}
\usepackage{amsmath}
\usepackage{pifont}
\usepackage{bbm}
\usepackage{booktabs}
\usepackage{makecell}
\usepackage{amsthm}
\usepackage{cancel}
\usepackage{soul}
\usepackage{listings}
\usepackage{wrapfig}
\usepackage{subcaption}
\usepackage{amssymb}

\usepackage{etoc}

\definecolor{tablecell1}{RGB}{242, 242, 242}
\definecolor{tablecell2}{RGB}{205, 232, 248}
\definecolor{tablecell3}{RGB}{255, 242, 204}

\sethlcolor{tablecell2}

\definecolor{dkgreen}{rgb}{0,0.6,0}
\definecolor{gray}{rgb}{0.5,0.5,0.5}
\definecolor{mauve}{rgb}{0.58,0,0.82}
\definecolor{navy}{rgb}{0.0,0.0,0.5}

\definecolor{graybg}{gray}{0.95}       
\definecolor{bluebg}{RGB}{205, 232, 248} 

\theoremstyle{plain}
\newtheorem{theorem}{Theorem}[section]
\newtheorem{proposition}[theorem]{Proposition}
\newtheorem{lemma}[theorem]{Lemma}

\theoremstyle{definition}

\theoremstyle{remark}

\title{SLOPE: Optimistic Potential Landscape Shaping for Model-based Reinforcement Learning}

\author{%
  Yao-Hui Li$^{1}$\thanks{Equal contribution.} \quad 
  Zeyu Wang$^{1}$\footnotemark[1] \quad 
  Xin Li$^{1}$\thanks{Corresponding author: \texttt{xinli@bit.edu.cn}} \quad 
  Wei Pang$^{2}$ \quad 
  Yingfang Yuan$^{2}$ \quad 
  Zhengkun Chen$^{1}$ \\
  \textbf{Boya Zhang}$^{3}$ \quad 
  \textbf{Riashat Islam}$^{4}$ \quad 
  \textbf{Alex Lamb}$^{5}$ \quad 
  \textbf{Yonggang Zhang}$^{6}$ \\
  \vspace{0.15cm} \\
  $^1$Beijing Institute of Technology, Beijing, China \quad
  $^2$Heriot-Watt University, Edinburgh, U.K. \\
  $^3$Shenzhen Institutes of Advanced Technology, CAS, Shenzhen, China \quad
  $^4$Microsoft Research NY, USA \\
  $^5$Tsinghua University, Beijing, China \quad
  $^6$Jilin University, Changchun, China
}

\begin{document}

\maketitle

\begin{abstract}
  Model-based reinforcement learning (MBRL) is sample-efficient but struggles in sparse reward settings. A critical bottleneck arises from the lack of informative gradients in sparse settings, where standard reward models often yield flat landscapes that struggle to guide planning. To address this challenge, we propose \underline{\textbf{S}}haping \underline{\textbf{L}}andscapes with \underline{\textbf{O}}ptimistic \underline{\textbf{P}}otential \underline{\textbf{E}}stimates (\textbf{SLOPE}), a novel framework that shifts reward modeling from predicting sparse scalars to constructing informative potential landscapes. SLOPE employs optimistic distributional regression to estimate high-confidence upper bounds, which amplifies rare success signals and ensures sufficient exploration gradients. Evaluations on \textbf{30+ tasks across 5 benchmarks} and \textbf{real-world robotic deployments}, demonstrate that SLOPE consistently outperforms leading baselines in \textbf{fully sparse}, \textbf{semi-sparse}, and \textbf{dense} rewards.
\end{abstract}

\section{Introduction}

Model-based reinforcement learning (MBRL) uses learned dynamics and reward models to simulate future trajectories and evaluate policies, relying on dense reward signals to provide fine-grained feedback for planning \cite{tdmpc, tdmpc2, bmpc}. In MBRL, such rewards are typically approximated by a learned reward model to guide policy optimization. In practice, such dense and well-shaped rewards are rarely available, while sparse binary signals indicating task success or failure are much more common in real-world environments. This mismatch limits the ability of MBRL to exploit its planning capabilities, as sparse rewards provide insufficient guidance for policy improvement. Therefore, addressing this limitation is critical for fully realizing the potential of MBRL in real-world.

\begin{wrapfigure}{r}{0.5\textwidth}
    \vspace{-15pt}
    \centering
    \includegraphics[width=\linewidth]{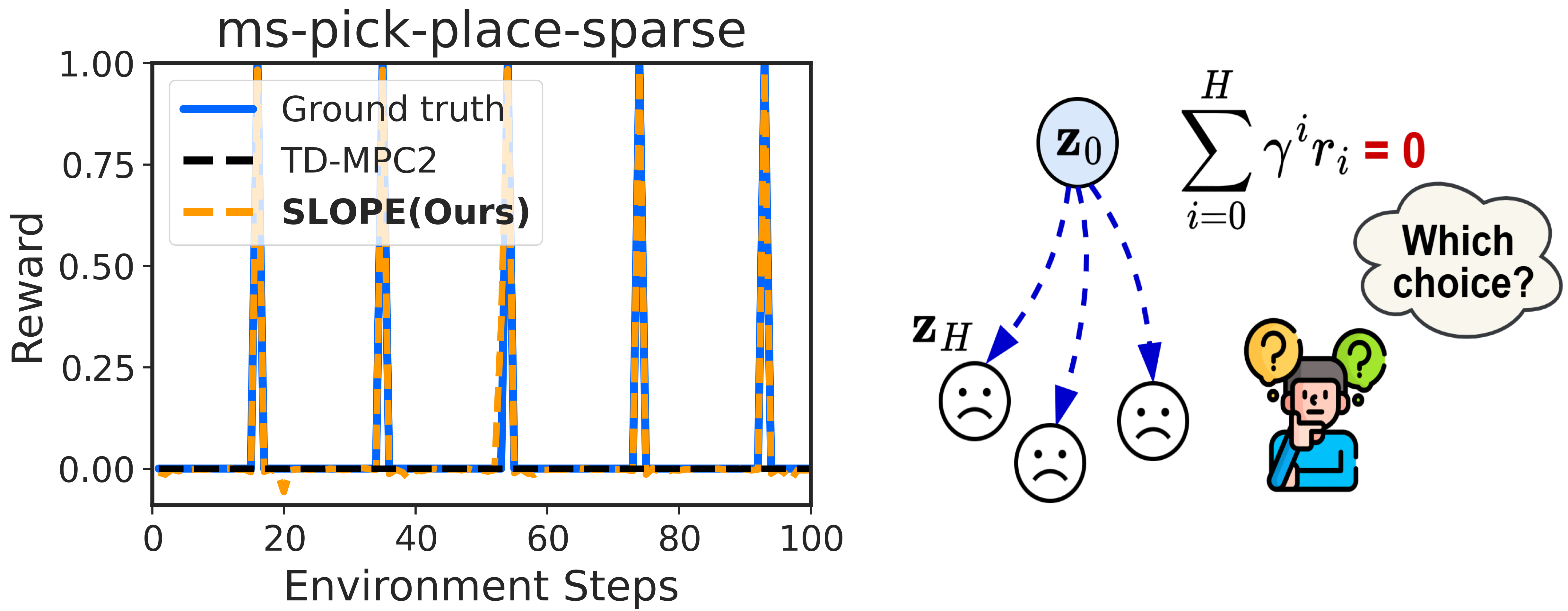}
    \caption{Key challenges in sparse-reward MBRL.}
    \label{fig:motivation}
    \vspace{-10pt}
\end{wrapfigure}

As illustrated in Fig.\ref{fig:motivation}, developing reward models in MBRL under sparse rewards presents two key challenges: i) \textbf{Reward learning failure under extreme data imbalance.} In sparse reward settings, the vast majority of trajectories yield zero rewards. This scarcity of successful samples makes it difficult for the reward model to extract meaningful success patterns, often causing the model to collapse toward the majority class (zero) \cite{dreamsmooth}. ii) \textbf{Ineffectiveness of ground-truth rewards for planning.} Most MBRL approaches \cite{tdmpc, tdmpc2, dreamerv3} focus on accurately regressing ground-truth scalar rewards. However, in sparse settings, strictly fitting these values results in a reward model that outputs zeros almost everywhere, even with perfect accuracy. This creates a gradient-free landscape during rollouts, offering the planner no intermediate signals to guide the agent toward the goal.

To address the aforementioned challenges, recent studies have sought to incorporate expert demonstrations to mitigate the exploration dilemma in sparse reward tasks. MoDem series \cite{modem, modemv2} introduced a multi-phase training framework that effectively leverages demonstration data to accelerate the training of visual MBRL algorithms. However, this approach focuses solely on enhancing the training strategy and still adheres to regressing ground-truth sparse rewards. Meanwhile, inverse RL \cite{adverserial_IRL, graph_IRL}, on the other hand, infers the reward function from expert demonstrations. However, in complex tasks, such approaches typically require a large amount of demonstration data to sufficiently cover the state-action space. Reward shaping techniques that incorporate intrinsic motivation, such as maximum entropy-based \cite{entropy_based_1, entropy_based_2}, count-based \cite{count_based_1, count_based_2}, or similarity-based \cite{similarity_based_1, similarity_based_2} approaches, can guide agents to discover meaningful behaviors in sparse reward environments. DEMO$^{3}$ \cite{demo3} artificially decomposed the sparse reward into multi-stage semi-sparse reward, assigning higher reshaped rewards to states in the later stages. Nevertheless, heuristic shaping relies on delicate, task-specific engineering and often misaligns with the true objective. This misalignment makes agents prone to \textit{reward hacking}, severely compromising the reliability of trajectory evaluation in sampling-based planners. Motivated by these limitations, a fundamental question arises.

\textit{How can we construct a dense, informative guiding signal for MBRL in sparse reward settings that provides reliable directional feedback for trajectory planning, while remaining aligned with the true task objective?}

In this paper, we propose \textbf{SLOPE} (\textbf{S}haping \textbf{L}andscapes with \textbf{O}ptimistic \textbf{P}otential \textbf{E}stimates), a novel framework that integrates Potential-Based Reward Shaping (PBRS) into MBRL to tackle the fundamental challenge of gradient starvation in sparse-reward environments. Unlike standard MBRS algorithms that regress sparse scalar rewards, SLOPE constructs a dense internal potential landscape to provide continuous directional guidance for planning. Specifically, it treats the agent’s learned $Q$-value function as a potential surface over the state space. This mechanism transforms the originally flat, gradient-free reward structure into a dense landscape, encouraging the agent to explore states with higher estimated potentials and providing directional reward signals during rollouts.

Although PBRS offers a dense potential construction, the effectiveness of the constructed landscape depends on the accuracy of the potential estimates. In sparse reward tasks, infrequent and delayed feedback often leads to underestimation of $Q$-values and slow convergence \cite{her}, resulting in a flat landscape that fails to facilitate exploration. To address this limitation, SLOPE employs optimism-driven landscape shaping. Unlike conventional methods that approximate the expected $Q$-value (mean), this mechanism leverages distributional value modeling to estimate a high-confidence upper bound. By amplifying the impact of rare but informative high-reward trajectories, the proposed mechanism ensures the shaped landscape possesses sufficient gradients to accelerate policy learning.

The versatility and robustness of SLOPE are underscored by an extensive empirical evaluation across \textbf{5} major benchmarks, encompassing over \textbf{30} diverse manipulation and motion tasks. By integrating SLOPE with diverse MBRL backbones (e.g., TD-MPC2 \cite{tdmpc2}, Dreamerv3 \cite{dreamerv3}) and evaluating across \textbf{varying reward densities} (sparse, semi-sparse, and dense reward), we demonstrate its consistent superiority over traditional scalar regression baselines. Beyond simulated environments, we further validate SLOPE through \textbf{real-world robotic} experiments, where the challenges of sparse rewards are inherently prevalent and critical. This comprehensive cross-backbone validation and real-world deployment establish SLOPE as a powerful, general-purpose enhancement for modern MBRLs.



\section{Related Work}
\label{related_work}
\subsection{MBRL with Sparse-Reward}
MBRLs improve sample efficiency by leveraging learned environment dynamics for planning. However, sparse reward settings introduce significant challenges, primarily due to imbalanced training data and the lack of informative gradients during rollouts. Recent approaches like MoDem \cite{modem} and MoDem-v2 \cite{modemv2} mitigate the exploration dilemma by leveraging demonstration data through a multi-stage training framework. While effective in pre-training, these methods generally adhere to the standard paradigm of regressing ground-truth scalar rewards. Consequently, even with improved initialization, the learned reward model often yields a flat, gradient-free landscape in sparse regions, failing to provide directional guidance for planning. Other works, such as DEMO$^{3}$ \cite{demo3}, rely on manually designed stage-wise rewards, which limits applicability. DreamSmooth \cite{dreamsmooth} tackles long-horizon sparse-reward tasks by predicting temporally smoothed rewards. LS-Imagine \cite{LS-Imagine} achieves strong performance in episodic sparse-reward tasks by constructing a long-short term integrated world model augmented with intrinsic rewards. In contrast, our proposed SLOPE framework fundamentally shifts from predicting scalars to constructing an optimistic potential landscape. By employing optimistic distributional regression, SLOPE ensures sufficient planning gradients even in sparse settings without requiring manual reward engineering. Detailed comparisons are presented in Table \ref{table1:baselines}.

\begin{table}[t!]
\small
\caption{Comparison of MBRL algorithmic features under sparse-reward settings. We contrast the reward modeling paradigms and supervision types of SLOPE against major baselines, highlighting how each method transforms or utilizes sparse environmental signals.}
\label{table1:baselines}
\centering
\setlength{\tabcolsep}{4.0pt} 
\begin{tabular}{lcccc}
\toprule  
\textbf{Method} & \textbf{Reward Shaping} & \textbf{Demonstrations} & \textbf{Learning Target} & \textbf{Reward Supervision} \\ 
\midrule 
\rowcolor{graybg} 
Dreamer-v3 \cite{dreamerv3} & \ding{55} & \ding{55} & Ground-truth reward & Sparse \\
TD-MPC2 \cite{tdmpc2}       & \ding{55} & \ding{55} & Ground-truth reward & Sparse \\
\rowcolor{graybg} 
MoDem \cite{modem}          & \ding{55} & \ding{51} & Ground-truth reward & Sparse \\
DreamSmooth \cite{dreamsmooth} & \ding{51} & \ding{55} & Smoothed reward & Dense + Sparse \\
\rowcolor{graybg} 
DEMO$^{3}$ \cite{demo3}     & \ding{51} & \ding{51} & Stage-wise reward & Semi-sparse \\ 
\midrule 
\rowcolor{bluebg} 
\textbf{SLOPE (Ours)}       & \ding{51} & \ding{51} & \textbf{Potential-based reward} & \textbf{Sparse} \\ 
\bottomrule 
\end{tabular}
\vskip -0.2in
\end{table}

\subsection{Reward Shaping}
Reward shaping is a widely used technique to address signal sparsity. Traditional methods often rely on heuristics or intrinsic motivation signals, such as maximum entropy \cite{entropy_based_1, entropy_based_2}, count-based exploration \cite{count_based_1, count_based_2, count_based_3}, or similarity metrics \cite{similarity_based_1, similarity_based_2, jiang2025episodic}. Although these methods encourage exploration, they inevitably alter the original optimization landscape, potentially leading to suboptimal behaviors and lacking theoretical convergence guarantees. PBRS \cite{ng_shaping, dynamic_pbrs, RLFP, BSRS} offers a principled alternative by guaranteeing the invariance of the optimal policy. However, the practical effectiveness of PBRS depends on the accuracy of the potential function. In sparse-reward tasks, the standard value estimation suffers from severe underestimation \cite{her}, rendering the shaped potential too weak to guide exploration. We address this limitation by integrating sparsity-aware optimism into the PBRS framework. Unlike standard PBRS which relies on expected values, SLOPE leverages high-confidence upper bound estimates to construct a steep, informative landscape, thereby accelerating convergence while theoretically preserving policy optimality.

\section{Preliminaries}
\label{preliminaries}
This section explains the basic notation in modeling RL problems and the TD-MPC2 framework, a MBRL algorithm on which our work is built. We then analyze the theoretical limitations of standard planning methods in sparse-reward tasks, which motivate the development of SLOPE.

\subsection{Problem Formulation}

We consider learning in a standard Markov Decision Process (MDP) formulated by a tuple of $\mathcal{M}= \left<\mathcal{S},\mathcal{A},r,\mathcal{P},\gamma \right>$ where $\mathcal{S}$ is the state space, $\mathcal{A}$ is the action space, and $r$ is a sparse reward function. This means that the rewards are typically binary, taking a value of $\left\{ 0,1 \right\}$ where a reward of 1 is received only upon successful completion of the task, and 0 otherwise. $\mathcal{P}\left( \mathbf{s}_{t+1}\left| \mathbf{s}_t,\mathbf{a}_t \right. \right)$ is the state transition function, and $\gamma \in \left[ 0,1 \right)$ is the discount factor to trade off immediate and future rewards. The goal of RL is to find an optimal policy $\pi ^{\ast}$ to maximize the expected cumulative return, $\pi ^{\ast}=\text{arg}\max _{\pi}\mathbb{E}_{\mathbf{a}_t\sim \pi \left( \cdot |\mathbf{s}_t \right) ,\mathbf{s}_t\sim \mathcal{P}}\left[ \sum\limits_{t=1}^T{\gamma ^tr\left( \mathbf{s}_t,\mathbf{a}_t \right)} \right]$, starting from an initial state $\mathbf{s}_0\in \mathcal{S}$ and following a policy $\pi _{\theta}\left( \cdot |\mathbf{s}_t \right)$ that is parameterized by a set of learnable parameters $\theta$.

\subsection{TD-MPC2}

We build our method on \textbf{TD-MPC2} \cite{tdmpc2}, a scalable MBRL algorithm that extends the TD-MPC \cite{tdmpc} framework. In this work, we focus on the single-task formulation. TD-MPC2 learns a latent-space world model comprising five core components: an encoder $h_{\theta}(\mathbf{o})$ mapping observations to latent states $\mathbf{z}$; a dynamics model $d_{\theta}(\mathbf{z}, \mathbf{a})$ predicting future states $\mathbf{z}'$; a reward model $R_{\theta}(\mathbf{z}, \mathbf{a})$; a value function $Q_{\theta}(\mathbf{z}, \mathbf{a})$; and a prior policy $\pi_{\theta}(\mathbf{z})$.

The value function $Q_{\theta}$ is trained via the Bellman operator $\mathcal{T}$:
\begin{equation}
\label{eq:bellman}
    \mathcal{T}Q(\mathbf{z},\mathbf{a}) = R(\mathbf{z},\mathbf{a}) + \gamma \mathbb{E}_{\mathbf{z}', \mathbf{a}'}[Q(\mathbf{z}',\mathbf{a}')]
\end{equation}
During inference, TD-MPC2 performs planning via Model Predictive Path Integral (MPPI) \cite{mppi}. It iteratively samples action sequences from a Gaussian distribution $\mathcal{N}(\mu, \sigma^2)$ and updates the distribution parameters using the top-$k$ trajectories to maximize the expected cumulative return $J(\tau)$:
\begin{align}
\label{eq:mppi_obj}
\mu^*, \sigma^* = \arg\max_{(\mu, \sigma)} \mathbb{E}_{\mathbf{a}_{t:t+H} \sim \mathcal{N}(\mu, \sigma^2)} \left[ J(\tau) \right],
\end{align}
where $J(\tau) = \sum_{h=t}^{H-1} \gamma^h R(\mathbf{z}_h, \mathbf{a}_h) + \gamma^H Q(\mathbf{z}_{t+H}, \mathbf{a}_{t+H})$.

\paragraph{The Challenge of Sparse Rewards.} Standard MBRLs are based on the assumption that $R_{\theta}$ and $Q_{\theta}$ provide dense signals to guide MPPI optimization. However, in sparse reward tasks, this assumption is broken down. 

\begin{figure*}
    \centering
    \includegraphics[width=1.0\linewidth]{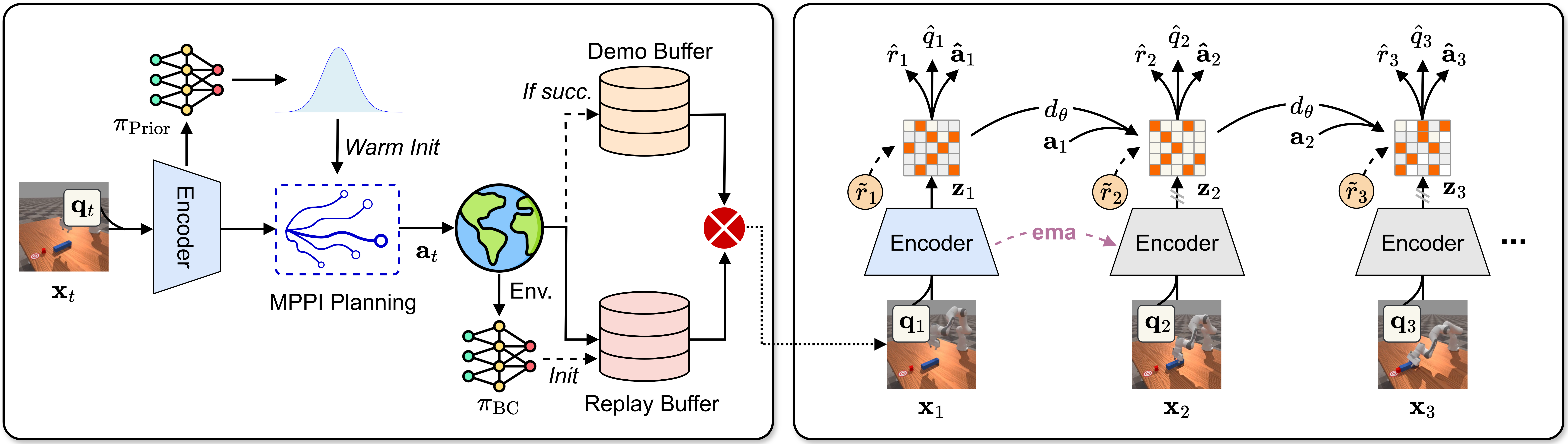}
    \caption{Framework of SLOPE. Building upon MoDem’s multi-phase accelerated learning framework, we also introduce two training enhancements: (i) initializing MPPI sampling distribution from the prior policy $\pi_{\text{Prior}}$, and (ii) augmenting the demonstration buffer with successful trajectories. The shaped reward $\widetilde{r}$ is used for training both the reward model $R_{\theta}$ and the $Q$ value function $Q_{\theta}$. The environment input consists of multimodal observations $\mathbf{o}=(\mathbf{x,q})$, where $\mathbf{x}$ denotes raw RGB images captured by the robot’s camera, and $\mathbf{q}$ represents the robot’s proprioceptive sensory information, e.g., the gripper state. Subsequent observations are encoded by the target encoder (illustrated in grey) which is updated via an exponential moving average (EMA) of the online encoder.}
    \label{fig:framework}
    \vskip -0.1in
\end{figure*}

Consider a scenario in which the goal is unreachable within the horizon $H$. With ground truth rewards $r_t = 0$ for all sampled trajectories, the learned model collapses to zero and $Q_{\theta}$ fails to differentiate between the superior and inferior states. TD-MPC2 employs a variant of MPPI that updates the sampling distribution parameters using only the top-$k$ trajectories with the highest estimated returns. The mean $\mu$ is updated at iteration $j$ as:
\begin{equation}
    \mu^{j+1} = \frac{\sum_{i=1}^{k} \Omega_i \mathbf{a}_i^*}{\sum_{i=1}^{k} \Omega_i}, \quad \text{where } \Omega_i = e^{\kappa \cdot J(\tau_i^*)}.
\end{equation}
Here, $\{\mathbf{a}_i^*\}_{i=1}^k$ represents the action sequences of the top-$k$ elites sorted by their cumulative return $J(\tau)$, and $\kappa$ is a temperature parameter. In a sparse setting where $J(\tau) \approx 0$ across all $N$ sampled trajectories, the top-$k$ selection degenerates into a random subset of the original noise. Furthermore, the importance weights become uniform. Consequently, the update $\mu^{j+1}$ essentially averages a random selection of noise, preventing the planner from shifting the search distribution towards the goal. This theoretical limitation motivates our shift from fitting sparse scalar rewards to learning a potential landscape.

\section{Methodology}

In this section, we introduce \textbf{SLOPE}, a specialized framework that extends PBRS through optimism-driven landscape construction. Rather than relying on standard scalar reward regression, SLOPE formulates a dense potential landscape to provide continuous guidance in model-based planning. The method comprises three core components: \textbf{Model-Based Planning via Potential Landscape}, which leverages PBRS guarantees to embed directional guidance directly into the planner's objective; \textbf{Optimism-Driven Landscape Shaping}, which approximates a quantile-driven optimistic distributional Bellman operator via asymmetric weighting to capture upper-quantile returns and drive exploration; and \textbf{Accelerated Training Strategies} to ensure stable and efficient convergence.

\begin{wrapfigure}{r}{0.5\textwidth}
  \vspace{-15pt} 
  \centering
  
  \begin{subfigure}[b]{0.215\textwidth} 
    \centering
    \includegraphics[width=\textwidth, trim=0cm 0.9cm 0cm 0cm, clip]{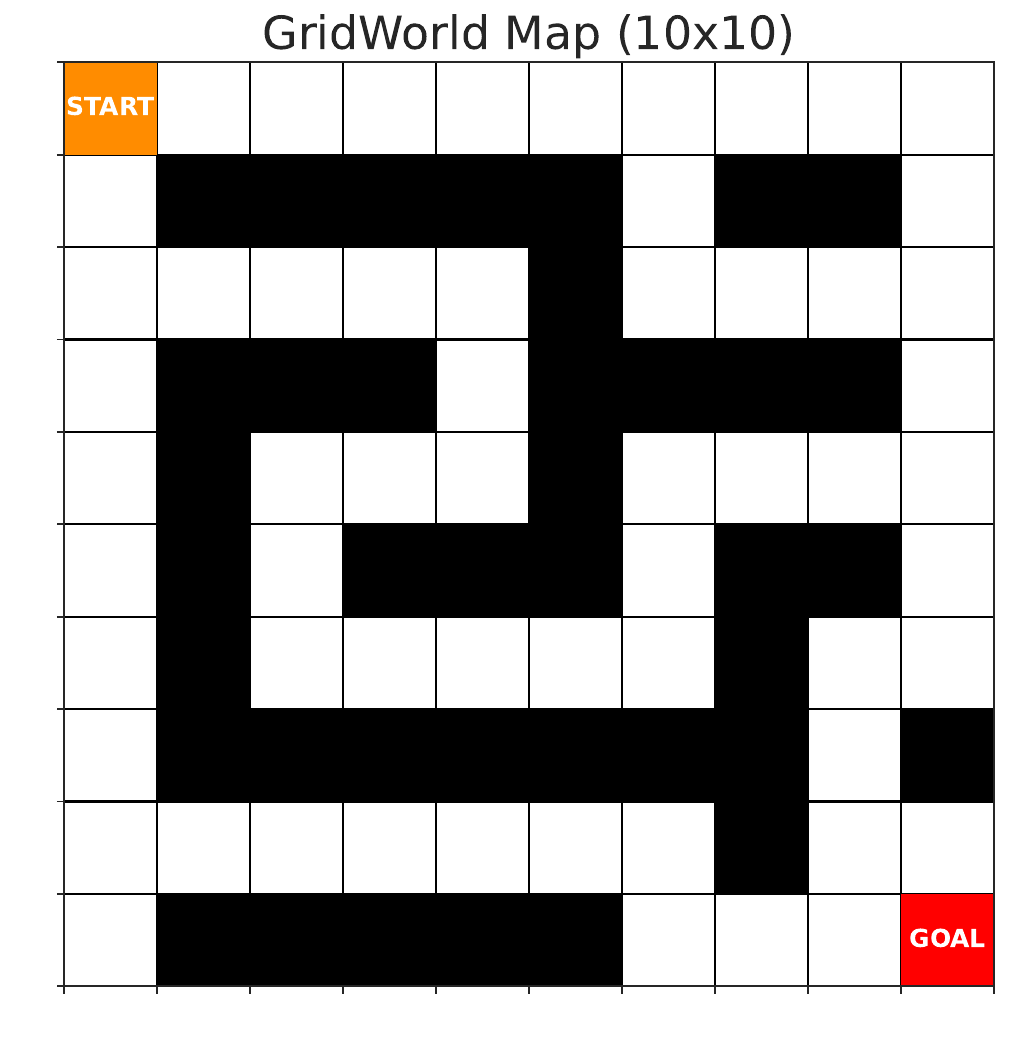}
    \label{fig:gridmap}
  \end{subfigure}
  \hfill 
  \begin{subfigure}[b]{0.245\textwidth} 
    \centering
    \includegraphics[width=\textwidth, trim=0cm 1.65cm 0cm 1.25cm, clip]{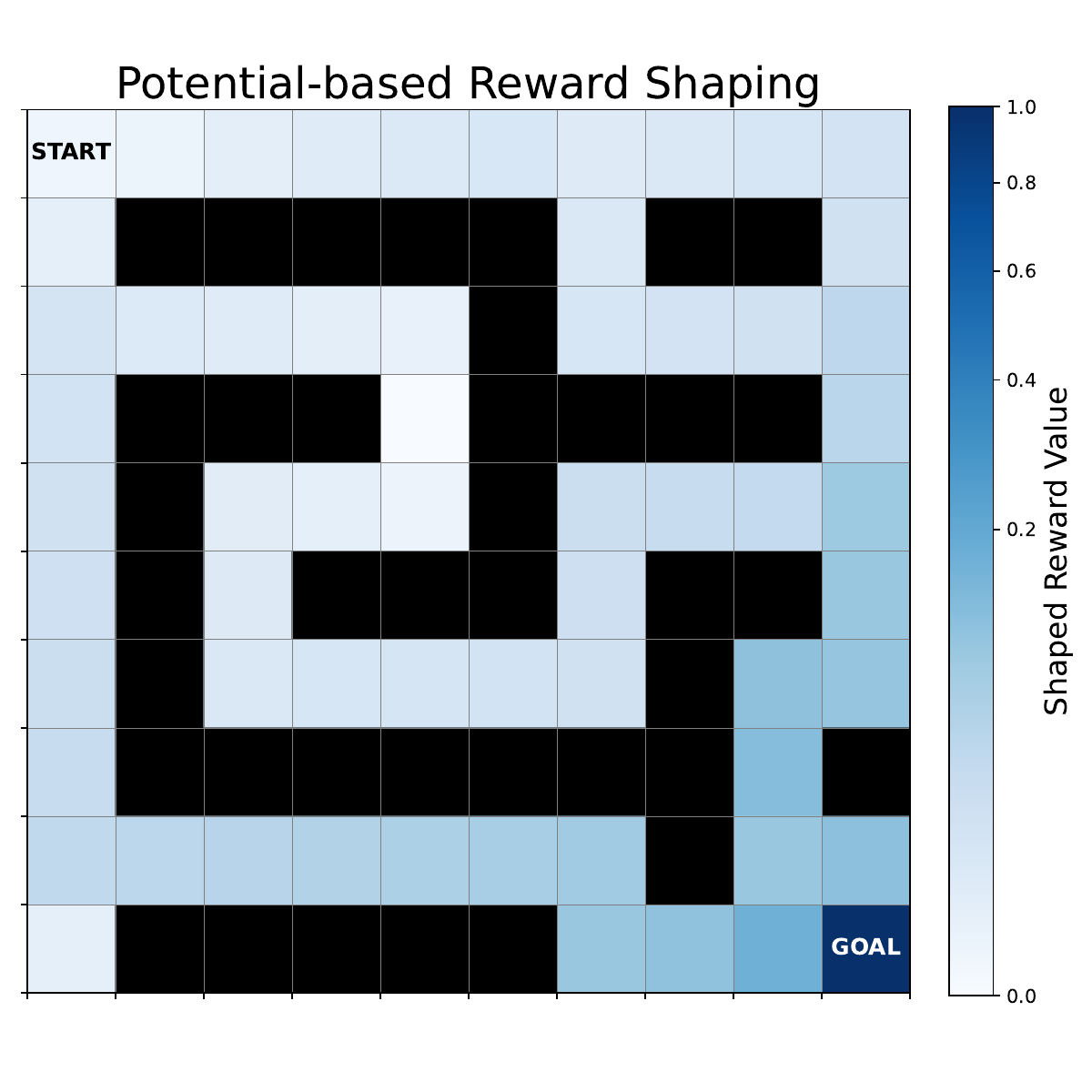}
    \label{fig:shaped_reward}
  \end{subfigure}
  
  \vskip -0.1in
  \caption{Toy example on a $10 \times 10$ GridWorld. \textbf{Left}: The original environment with sparse rewards ($+1$ at goal, $0$ elsewhere). \textbf{Right}: The dense reward signal generated by PBRS using converged optimal value function $V^*$. \protect$\blacksquare$: wall.}
  \label{fig:toy_example}
  \vspace{-10pt} 
\end{wrapfigure}

\subsection{Model-Based Planning via Potential Landscape}
\label{sec:pbrs}
Standard MBRL methods are based on planning over imagined trajectories by maximizing the sum of predicted scalar rewards. However, in sparse reward settings, this reliance on immediate scalar feedback creates a fundamental gradient starvation problem. As illustrated in the $10 \times 10$ GridWorld example (Fig.\ref{fig:toy_example}, Left), a standard reward model effectively predicts zero everywhere outside the goal state. This topological flatness means $R_\theta$ provides negligible gradients, depriving the latent encoder and dynamics model of crucial representation learning signals. Even with a terminal value term (Eq.~\ref{eq:mppi_obj}), the planner's overall objective landscape remains uninformative, depriving sampling-based planners like MPPI of the directional information necessary to distinguish between stagnant states and progressive paths.

To overcome this structural bottleneck, our approach transitions reward modeling from predicting sparse rewards to constructing a dense potential landscape. Instead of learning a flat $R_\theta$, we formulate a reshaped reward model $\widetilde{R}_\theta(\mathbf{z}, \mathbf{a})$ based on PBRS \citep{ng_shaping}:
\begin{theorem}[Potential-Based Reward Shaping \citep{ng_shaping}]
\label{thm:ng_shaping}
    Given the MDP $\mathcal{M}$ with optimal policy $\pi^*$, the reshaped MDP $\widetilde{\mathcal{M}}$ with the reward function defined as:
    \begin{equation}
    \label{eq:shaped_reward}
    \widetilde{r}(\mathbf{s},\mathbf{a}) = {r}(\mathbf{s},\mathbf{a}) + \gamma \mathbb{E}_{\mathbf{s}'\sim \mathcal{P}}\left[ \Phi \left( \mathbf{s}' \right) \right] - \Phi(\mathbf{s})
    \end{equation}
    preserves the optimal policy, i.e., $\widetilde{\pi}^*= \pi^*$, for any bounded potential function $\Phi \colon \mathcal{S} \to \mathbb{R}$. Furthermore, the optimal value functions satisfy:
    \begin{equation}
    \widetilde{V}^*(\mathbf{s}) = {V}^*(\mathbf{s}) - \Phi(\mathbf{s}), \quad \widetilde{Q}^*(\mathbf{s},\mathbf{a}) = {Q}^*(\mathbf{s},\mathbf{a}) - \Phi(\mathbf{s}).
    \end{equation}
\end{theorem}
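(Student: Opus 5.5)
The plan is to verify directly that $Q^*(\mathbf{s},\mathbf{a}) - \Phi(\mathbf{s})$ satisfies the Bellman optimality equation of the reshaped MDP $\widetilde{\mathcal{M}}$. Policy invariance then follows because shifting by a function of the state alone does not change the argmax over actions.

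First, recall that $Q^*$ is the unique fixed point of the optimality operator $\mathcal{T}^*Q(\mathbf{s},\mathbf{a}) = r(\mathbf{s},\mathbf{a}) + \gamma\mathbb{E}_{\mathbf{s}'\sim\mathcal{P}}[\max_{\mathbf{a}'}Q(\mathbf{s}',\mathbf{a}')]$. This operator is a $\gamma$-contraction in sup-norm on bounded functions. Let $\widetilde{\mathcal{T}}^*$ denote the analogous operator with $\widetilde{r}$ from Eq.~\eqref{eq:shaped_reward} in place of $r$. Because $\Phi$ is bounded and $r\in\{0,1\}$, $\widetilde{r}$ is bounded, so $\widetilde{\mathcal{T}}^*$ is also a $\gamma$-contraction with a unique bounded fixed point $\widetilde{Q}^*$.

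Next, define the candidate $\hat{Q}(\mathbf{s},\mathbf{a}) := Q^*(\mathbf{s},\mathbf{a}) - \Phi(\mathbf{s})$, which is bounded. Apply $\widetilde{\mathcal{T}}^*$ to it. Since $\Phi(\mathbf{s}')$ does not depend on $\mathbf{a}'$, we have $\max_{\mathbf{a}'}\hat{Q}(\mathbf{s}',\mathbf{a}') = V^*(\mathbf{s}') - \Phi(\mathbf{s}')$. Substituting gives
\begin{equation*}
\widetilde{\mathcal{T}}^*\hat{Q}(\mathbf{s},\mathbf{a}) = r(\mathbf{s},\mathbf{a}) + \gamma\mathbb{E}_{\mathbf{s}'}[\Phi(\mathbf{s}')] - \Phi(\mathbf{s}) + \gamma\mathbb{E}_{\mathbf{s}'}[V^*(\mathbf{s}') - \Phi(\mathbf{s}')].
\end{equation*}
The two $\gamma\mathbb{E}[\Phi(\mathbf{s}')]$ terms cancel. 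This is the key telescoping step, and it is why the expectation is over the same transition kernel in both places. What remains is $Q^*(\mathbf{s},\mathbf{a}) - \Phi(\mathbf{s}) = \hat{Q}(\mathbf{s},\mathbf{a})$. By uniqueness of the fixed point, $\widetilde{Q}^* = Q^* - \Phi$, and taking the max over $\mathbf{a}$ gives $\widetilde{V}^* = V^* - \Phi$.

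For policy invariance, note that $\arg\max_{\mathbf{a}}\widetilde{Q}^*(\mathbf{s},\mathbf{a}) = \arg\max_{\mathbf{a}}Q^*(\mathbf{s},\mathbf{a})$ for every $\mathbf{s}$. Any policy greedy with respect to one optimal $Q$ is therefore greedy with respect to the other, so the optimal policies coincide (as sets, when ties occur).

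The main obstacle is the setting rather than the algebra. The paper states the objective as a finite sum with horizon $T$, while the identity above is cleanest in the infinite-horizon discounted setting. In the finite-horizon or episodic case, the telescoping leaves a boundary term $\gamma^T\Phi(\mathbf{s}_T)$, which is not action-independent in general. I would handle this by either working in the infinite-horizon discounted setting or requiring $\Phi(\text{terminal})=0$ (absorbing terminal states), as in Ng et al. For completeness, I would also check the alternative trajectory-level argument: summing $\gamma^t\widetilde{r}_t$ telescopes to $\sum_t\gamma^t r_t - \Phi(\mathbf{s}_0)$ in expectation, for any policy. This confirms that every policy's return shifts by the same $-\Phi(\mathbf{s}_0)$.
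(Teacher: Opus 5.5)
Your proof is correct and is essentially the original argument of Ng, Harada and Russell that the paper relies on: show that $Q^*-\Phi$ solves the shaped Bellman optimality equation, invoke uniqueness of the fixed point, and note that a state-only shift leaves the argmax unchanged. The paper itself gives no proof and only cites that result. Your caveat about the finite-horizon objective in the problem formulation is well taken, since the statement's stationary $V^*$ and $Q^*$ presuppose the infinite-horizon discounted setting (or absorbing terminal states with $\Phi=0$ there), which is exactly where your argument applies.
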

The topological quality of this reshaped landscape heavily depends on the choice of $\Phi$. Ideally, as shown in Fig.~\ref{fig:toy_example} (Right), setting $\Phi$ to the true optimal value function $V^*$ provides perfect directional guidance. By replacing the sparse target $r$ with such a dense potential difference, the reward model is forced to encode local value gradients at \textit{every single time step}. By predicting this local trend of the potential landscape instead of merely sparse scalar values, the reward model learns to capture the trend of the potential landscape, providing the planner with dense, step-by-step directional guidance.

Since the optimal value is unknown a priori, we employ a self-evolving landscape strategy \cite{BSRS}. To ensure stable bootstrapping and prevent high-frequency gradient oscillations, we instantiate the potential function using the agent's slow-moving target value network $Q_{\bar{\theta}}$:
\begin{equation}
\Phi(\mathbf{s}) = \eta \max_{\mathbf{a}} Q_{\bar{\theta}}(\mathbf{s}, \mathbf{a}),
\end{equation}
where $\eta$ is a scaling factor and $\bar{\theta}$ denotes the target parameters updated via exponential moving average (EMA). Crucially, we uniquely
leverage this construct to enhance the trajectory ranking mechanism within the model-based planner. By anchoring the potential to a slow-moving target network, the reshaped reward landscape remains stable during the planner's short-term rollout. This effectively transforms the global value estimate into a reliable terminal evaluation for trajectory ranking. Formally, this timescale decoupling ensures that the dynamic evolution of $\Phi$ preserves the mathematical structure of PBRS and maintains the original optimization objective:
\begin{proposition}[Optimal Policy Invariance under Time-Scale Decoupled Shaping]
\label{prop:policy_invariance}
Let $\widetilde{\mathcal{M}}_k$ be the reshaped MDP at training iteration $k$, where the shaping potential is anchored to the target network: $\Phi_k(\mathbf{s}) = \eta \max_{\mathbf{a}} Q_{\bar{\theta}_k}(\mathbf{s}, \mathbf{a})$. Assuming the target network $\bar{\theta}_k$ remains fixed during the intra-iteration planning and optimization phase, the exact telescoping property of PBRS holds within $\widetilde{\mathcal{M}}_k$. Consequently, while the reward landscape temporally evolves across iterations, the underlying optimal policy of each localized snapshot remains structurally invariant and identical to that of the original MDP $\mathcal{M}$:
\[
\pi^*_{\widetilde{\mathcal{M}}_1} = \pi^*_{\widetilde{\mathcal{M}}_2} = \cdots = \pi^*_{\widetilde{\mathcal{M}}_k} = \cdots = \pi^*_{\mathcal{M}}
\]
\end{proposition}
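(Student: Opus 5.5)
The plan is to reduce the statement to a per-iteration application of Theorem~\ref{thm:ng_shaping}. The key observation is that, under the stated assumption, the potential $\Phi_k(\mathbf{s}) = \eta \max_{\mathbf{a}} Q_{\bar{\theta}_k}(\mathbf{s},\mathbf{a})$ is a fixed, state-only function throughout iteration $k$. First I will check that $\Phi_k$ is a legitimate potential for Theorem~\ref{thm:ng_shaping}. It depends only on $\mathbf{s}$, since the maximization over $\mathbf{a}$ removes the action argument. It is time-invariant within the snapshot, because $\bar{\theta}_k$ is frozen. It is bounded: rewards lie in $\{0,1\}$ and $\gamma<1$, so one may assume (or enforce, for instance by clipping the network output) that $Q_{\bar{\theta}_k}$ takes values in $[0, 1/(1-\gamma)]$. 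A network that is merely finite-valued on a compact latent domain would also suffice.

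Next I will verify the telescoping property directly inside $\widetilde{\mathcal{M}}_k$. Take any policy $\pi$ and any start state $\mathbf{s}_0$. Summing the shaped rewards $\widetilde{r}_k(\mathbf{s}_t,\mathbf{a}_t) = r(\mathbf{s}_t,\mathbf{a}_t) + \gamma \Phi_k(\mathbf{s}_{t+1}) - \Phi_k(\mathbf{s}_t)$ with discounting gives
\[
\textstyle\sum_{t=0}^{T-1} \gamma^t \widetilde{r}_k = \sum_{t=0}^{T-1}\gamma^t r + \gamma^T \Phi_k(\mathbf{s}_T) - \Phi_k(\mathbf{s}_0).
\]
The cancellation is exact precisely because the same $\Phi_k$ appears at every step, which is where the fixed-target assumption is used. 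Boundedness makes $\gamma^T\Phi_k(\mathbf{s}_T)\to 0$ as $T\to\infty$. Taking expectations then yields $\widetilde{V}^\pi_k(\mathbf{s}_0) = V^\pi(\mathbf{s}_0) - \Phi_k(\mathbf{s}_0)$ for every $\pi$, and similarly $\widetilde{Q}^\pi_k(\mathbf{s},\mathbf{a}) = Q^\pi(\mathbf{s},\mathbf{a}) - \Phi_k(\mathbf{s})$. Here the expectation form in Eq.~\eqref{eq:shaped_reward} is handled by the tower property.

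Since the offset $\Phi_k(\mathbf{s})$ does not depend on $\pi$ or on $\mathbf{a}$, we get $\arg\max_{\mathbf{a}} \widetilde{Q}^*_k(\mathbf{s},\cdot) = \arg\max_{\mathbf{a}} Q^*(\mathbf{s},\cdot)$. Hence $\pi^*_{\widetilde{\mathcal{M}}_k} = \pi^*_{\mathcal{M}}$, which is exactly the conclusion of Theorem~\ref{thm:ng_shaping}. The chain of equalities then follows by transitivity, because each snapshot is compared against the same original $\mathcal{M}$ and nothing about $\Phi_k$ beyond boundedness is used. Any ties in the $\arg\max$ are handled by reading the equality as equality of optimal policy sets.

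The main obstacle is conceptual rather than computational. Across iterations the potential changes from $\Phi_k$ to $\Phi_{k+1}$, and a trajectory whose rewards were shaped under mixed potentials would not telescope: it leaves residual terms $\gamma^{t+1}(\Phi_{k+1}-\Phi_k)(\mathbf{s}_{t+1})$. I will therefore state carefully that the claim concerns each snapshot MDP separately, not the nonstationary process. I will also remark that the EMA update keeps $\|\Phi_{k+1}-\Phi_k\|_\infty$ small, so any mixed-iteration discrepancy is controlled. That remark is offered only as motivation and is not needed for the proposition as stated.
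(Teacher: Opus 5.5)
Your proposal is correct and follows essentially the same route as the paper. Freezing $\bar\theta_k$ makes $\Phi_k$ a fixed, state-only potential, so the PBRS telescoping and Ng et al.'s theorem apply exactly within each snapshot $\widetilde{\mathcal{M}}_k$; the chain of equalities then follows because every snapshot is compared to the same $\mathcal{M}$, and, like the paper, you note that a mixed-potential non-stationary process would not telescope. Your explicit checks that $\Phi_k$ is bounded and that the telescoping identity holds simply spell out details the paper leaves to the cited theorem.
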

\begin{proof}
See Appendix~\ref{proof:policy_invariance} for detailed proof.
\end{proof}

\subsection{Optimism-Driven Landscape Shaping}
\label{sec:odbo}
Although PBRS offers a mechanism for dense feedback, its efficacy depends on the quality of the value estimates. Standard regression techniques focusing on the expectation frequently suppress rare success signals in sparse reward tasks, yielding a flat potential landscape.

To address this, we employ an \textbf{Optimism-Driven Value Learning} mechanism to actively shape the potential landscape. By introducing an optimistic bias, this approach amplifies the impact of rare successful trajectories against the background of failure. Consequently, the constructed potential landscape retains informative gradients pointing towards high-reward regions.

We implement this mechanism via \textbf{Quantile-weighted Cross-Entropy (QCE)} loss. Unlike standard distributional approaches (e.g., TD-MPC2) that use symmetric objectives, the QCE loss imposes an asymmetric penalty on prediction errors. We parameterize the action-value function $Q_\theta(\mathbf{s}, \mathbf{a})$ as a categorical distribution over discrete support bins, where the scalar $Q$-value is obtained via the expectation $\mathbb{E}[Q_\theta(\mathbf{s}, \mathbf{a})]$.

Formally, the regression target $y$ is first computed using the standard scalar expectation of the target network:
\begin{equation}
    y = \widetilde{r}(\mathbf{s}, \mathbf{a}) + \gamma \, \mathbb{E}_{\mathbf{s}' \sim \mathcal{P}} \left[ \max_{\mathbf{a}'} \mathbb{E}[Q_{\bar{\theta}}(\mathbf{s}', \mathbf{a}')] \right].
\end{equation}
Following TD-MPC2 \cite{tdmpc2}, we project the scalar target $y$ into a distributional representation $\Psi(y)$ using soft two-hot encoding. This technique distributes probability mass across the fixed discrete support bins via linear interpolation. To enforce optimism, we define the QCE objective as:
\begin{equation}
\label{eq:qce_loss}
    \mathcal{L}_{\text{QCE}}(\theta) = \mathbb{E}_{\mathcal{D}} \left[ w(\delta) \cdot \text{CE}\left( \Psi(y), Q_\theta(\mathbf{s}, \mathbf{a}) \right) \right].
\end{equation}
Here, $\text{CE}(\cdot, \cdot)$ denotes the Cross-Entropy loss, and $w(\delta)$ is a dynamic weighting term based on the scalar estimation error $\delta = \mathbb{E}[Q_\theta(\mathbf{s}, \mathbf{a})] - y$:
\begin{equation}
    w(\delta) = \tau \cdot \mathbbm{1}_{\{\delta < 0\}} + (1 - \tau) \cdot \mathbbm{1}_{\{\delta \geq 0\}},
\end{equation}
where $\tau \in (0.5, 1]$ represents the quantile coefficient. By setting $\tau > 0.5$, the objective penalizes underestimation ($\delta < 0$) more heavily than overestimation. This asymmetry drives the learned value distribution towards the upper quantiles of the return distribution, effectively encouraging exploration in uncertain but potentially high-reward regions.

To theoretically guarantee the stability of this optimistic update, we formalize it as dynamic reward shaping and prove its convergence properties below.

\begin{proposition}[Contraction Mapping under Target-Anchored Shaping]
\label{prop:contraction_target}
Consider the reshaped MDP $\widetilde{\mathcal{M}} = \left< \mathcal{S}, \mathcal{A}, \widetilde{r}, \mathcal{P}, \gamma \right>$ with the reward dynamically modified by the target-anchored potential function. Let $\Phi(\mathbf{s}) = \eta \max_{\mathbf{a}} Q_{\bar{\theta}}(\mathbf{s}, \mathbf{a})$, where $Q_{\bar{\theta}}$ is a target network held fixed during the current planning and optimization step. The reshaped reward is defined as:
\[
\widetilde{r}_{\Phi}(\mathbf{s}, \mathbf{a}) = r(\mathbf{s}, \mathbf{a}) + \gamma \mathbb{E}_{\mathbf{s}' \sim \mathcal{P}}[\Phi(\mathbf{s}')] - \Phi(\mathbf{s})
\]
For any scaling factor $\eta > 0$, the Bellman Operator $\mathcal{T}_{\Phi}$ applied to this target-anchored reshaped MDP is a strict $\gamma$-contraction in the $L_\infty$ norm.
\end{proposition}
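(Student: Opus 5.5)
The argument rests on one observation: once the target network $Q_{\bar{\theta}}$ is frozen, the shaped reward $\widetilde{r}_{\Phi}$ is a fixed function of $(\mathbf{s},\mathbf{a})$ that does not depend on the $Q$-function being iterated. The operator $\mathcal{T}_{\Phi}$ is then a standard Bellman optimality operator with a different (fixed) reward, and the usual contraction argument applies verbatim.

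Concretely, define for bounded $Q:\mathcal{S}\times\mathcal{A}\to\mathbb{R}$
\[
(\mathcal{T}_{\Phi}Q)(\mathbf{s},\mathbf{a}) = \widetilde{r}_{\Phi}(\mathbf{s},\mathbf{a}) + \gamma\,\mathbb{E}_{\mathbf{s}'\sim\mathcal{P}}\bigl[\max_{\mathbf{a}'} Q(\mathbf{s}',\mathbf{a}')\bigr].
\]
The first step is well-posedness. Assuming $r$ is bounded (it takes values in $\{0,1\}$) and $Q_{\bar{\theta}}$ is bounded (the network outputs an expectation over a finite discrete support, so it is bounded by the extreme bin values), $\Phi$ is bounded by $\eta\|Q_{\bar{\theta}}\|_\infty$. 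Hence
\[
\|\widetilde{r}_{\Phi}\|_\infty \le 1+(1+\gamma)\,\eta\,\|Q_{\bar{\theta}}\|_\infty < \infty,
\]
so $\mathcal{T}_{\Phi}$ maps the Banach space $(B(\mathcal{S}\times\mathcal{A}),\|\cdot\|_\infty)$ into itself. This is where $\eta>0$ enters, and it enters only through finiteness: any finite $\eta$ works.

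The second step is the contraction estimate. For bounded $Q_1,Q_2$, the reward terms cancel exactly in $\mathcal{T}_{\Phi}Q_1-\mathcal{T}_{\Phi}Q_2$, because $\widetilde{r}_{\Phi}$ depends only on the frozen $\bar{\theta}$ and not on $Q_i$. This cancellation is the point where target anchoring is essential. Next, apply the elementary inequality $|\max_{\mathbf{a}'}f-\max_{\mathbf{a}'}g|\le\max_{\mathbf{a}'}|f-g|$ pointwise in $\mathbf{s}'$, then use monotonicity of expectation:
\[
|(\mathcal{T}_{\Phi}Q_1-\mathcal{T}_{\Phi}Q_2)(\mathbf{s},\mathbf{a})| \le \gamma\,\mathbb{E}_{\mathbf{s}'}\bigl[\max_{\mathbf{a}'}|Q_1-Q_2|(\mathbf{s}',\mathbf{a}')\bigr] \le \gamma\|Q_1-Q_2\|_\infty .
\]
Taking the supremum over $(\mathbf{s},\mathbf{a})$ gives the strict $\gamma$-contraction, since $\gamma<1$. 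Banach's fixed-point theorem then yields a unique fixed point $\widetilde{Q}^*$, and by Theorem~\ref{thm:ng_shaping} it equals $Q^*-\Phi$, consistent with Proposition~\ref{prop:policy_invariance}.

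The main obstacle is conceptual rather than computational. If $\Phi$ were tied to the online $Q$ being iterated, the reward itself would vary with $Q$ and contribute a term of order $\eta(1+\gamma)\|Q_1-Q_2\|_\infty$, which would destroy the contraction for large $\eta$. The proof must therefore state explicitly that $\bar{\theta}$ is held fixed during the step, so that $\Phi$ is an exogenous bounded function. I would also remark briefly that the QCE asymmetric weighting changes only which estimate the regression converges toward, not the operator $\mathcal{T}_{\Phi}$ itself, so it does not affect this contraction argument.
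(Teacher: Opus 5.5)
Your proof is correct and follows the paper's argument: the shaped reward $\widetilde{r}_{\Phi}$ cancels because $\Phi$ depends only on the frozen target network, and then non-expansiveness of the max (the paper's Lemma) together with monotonicity of expectation gives $\|\mathcal{T}_{\Phi}Q_1-\mathcal{T}_{\Phi}Q_2\|_\infty\le\gamma\|Q_1-Q_2\|_\infty$. Your well-posedness step (boundedness of $\widetilde{r}_{\Phi}$, so $\mathcal{T}_{\Phi}$ maps bounded functions to bounded functions) and your identification of the fixed point as $Q^*-\Phi$ are small additions the paper omits.
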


\begin{proof}
See Appendix~\ref{proof:prop_convergence} for detailed proof.
\end{proof}

\begin{proposition}[Consistent Policy Improvement]
\label{prop:policy_improvement}
Let $\pi_{k+1}$ be derived greedily from the reshaped landscape, i.e., $\pi_{k+1}(\mathbf{s}) = \arg\max_{\mathbf{a}} Q_k^{\pi_k}(\mathbf{s}, \mathbf{a})$. Then, this update monotonically improves the policy not only in the reshaped MDP $\widetilde{\mathcal{M}}_k$ but also with respect to the original objective $\mathcal{M}$, such that $Q_{\mathcal{M}}^{\pi_{k+1}}(\mathbf{s}, \mathbf{a}) \ge Q_{\mathcal{M}}^{\pi_k}(\mathbf{s}, \mathbf{a})$.
\end{proposition}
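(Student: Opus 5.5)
The plan is to reduce the statement to the classical policy improvement theorem in the reshaped MDP $\widetilde{\mathcal{M}}_k$ and then transfer the improvement back to $\mathcal{M}$ through the exact PBRS value relation. Throughout, $\Phi_k$ is held fixed during iteration $k$, as in Proposition~\ref{prop:policy_invariance}. Here $Q_k^{\pi_k}$ denotes the action-value of $\pi_k$ in $\widetilde{\mathcal{M}}_k$.

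\textbf{Step 1: value relation for a fixed policy.} First I would prove a policy-evaluation analogue of Theorem~\ref{thm:ng_shaping}: for any stationary policy $\pi$,
\begin{equation*}
Q_k^{\pi}(\mathbf{s},\mathbf{a}) = Q_{\mathcal{M}}^{\pi}(\mathbf{s},\mathbf{a}) - \Phi_k(\mathbf{s}).
\end{equation*}
To get this, expand $Q_k^{\pi}$ as the discounted sum of $\widetilde{r}$ along trajectories of $\pi$. The potential terms telescope in expectation: $\sum_t \gamma^t(\gamma\mathbb{E}[\Phi_k(\mathbf{s}_{t+1})]-\Phi_k(\mathbf{s}_t)) = -\Phi_k(\mathbf{s}_0)$. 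The tail term $\gamma^T\Phi_k(\mathbf{s}_T)$ vanishes because $\Phi_k$ is bounded; boundedness holds since the categorical $Q_{\bar\theta_k}$ has finite support. An equivalent route is to check that $Q_{\mathcal{M}}^{\pi}-\Phi_k$ satisfies the Bellman evaluation equation of $\widetilde{\mathcal{M}}_k$. That equation has a unique fixed point by the contraction argument of Proposition~\ref{prop:contraction_target}, applied to the evaluation operator rather than the optimality operator.

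\textbf{Step 2: greedy actions coincide.} Since $\Phi_k(\mathbf{s})$ does not depend on $\mathbf{a}$, Step~1 gives
\begin{equation*}
\arg\max_{\mathbf{a}} Q_k^{\pi_k}(\mathbf{s},\mathbf{a}) = \arg\max_{\mathbf{a}} Q_{\mathcal{M}}^{\pi_k}(\mathbf{s},\mathbf{a}).
\end{equation*}
So $\pi_{k+1}$ is also the greedy improvement of $\pi_k$ in the original MDP $\mathcal{M}$.

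\textbf{Step 3: policy improvement in both MDPs.} Apply the standard policy improvement theorem. From $Q^{\pi_k}(\mathbf{s},\pi_{k+1}(\mathbf{s})) \ge V^{\pi_k}(\mathbf{s})$, repeatedly unroll the Bellman equation for $\pi_{k+1}$ and use monotonicity of the evaluation operator. In the limit this gives $Q^{\pi_{k+1}} \ge Q^{\pi_k}$ pointwise. Doing this in $\mathcal{M}$ yields $Q_{\mathcal{M}}^{\pi_{k+1}} \ge Q_{\mathcal{M}}^{\pi_k}$ directly. Doing it in $\widetilde{\mathcal{M}}_k$ yields the reshaped statement, which also follows from the $\mathcal{M}$ inequality by subtracting the common $\Phi_k(\mathbf{s})$.

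\textbf{Main obstacle: interpretation across iterations.} Proposition~\ref{prop:policy_improvement} compares $\pi_{k+1}$ and $\pi_k$ within the single snapshot $\widetilde{\mathcal{M}}_k$, but the next iteration uses $\Phi_{k+1}$. I would handle this by stating improvement in $\mathcal{M}$ as the invariant quantity. That quantity does not depend on $\Phi$, so the chain $Q_{\mathcal{M}}^{\pi_0}\le Q_{\mathcal{M}}^{\pi_1}\le\cdots$ holds regardless of how $\bar\theta_k$ evolves. The result assumes exact evaluation of $Q_k^{\pi_k}$ and exact greediness. The optimistic QCE estimate only perturbs which function is maximized, so I would note that the guarantee applies to the exact greedy step; any extension to approximate $Q$ would require an error term that I would not attempt to bound here.
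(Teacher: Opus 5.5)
Your proposal is correct and uses the same two ingredients as the paper's proof: the standard policy improvement theorem and the fixed-policy relation $Q_k^{\pi}(\mathbf{s},\mathbf{a}) = Q_{\mathcal{M}}^{\pi}(\mathbf{s},\mathbf{a}) - \Phi_k(\mathbf{s})$. The paper applies them in the reverse order (improvement $Q_k^{\pi_{k+1}} \ge Q_k^{\pi_k}$ in $\widetilde{\mathcal{M}}_k$, then cancelling $\Phi_k$), and it only asserts that relation, since Theorem~\ref{thm:ng_shaping} covers only optimal value functions, so your telescoping/fixed-point derivation for an arbitrary policy with bounded $\Phi_k$ usefully fills that step.
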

\begin{proof}
See Appendix~\ref{proof:prop_improvement} for detailed proof.
\end{proof}

\subsection{Accelerating MBRL with Sparse Rewards}
In the early stage of training, a limited amount of high-quality demonstration data is crucial for efficient policy exploration and model learning, while it can also be leveraged to initialize an informative potential landscape. Building upon previous work MoDem \cite{modem}, which accelerates MBRL with limited demonstrations, we adopt its three-phase training framework. Additionally, we also introduce two training enhancements: (1) initializing the MPPI sampling distribution with a prior policy in place of the random distribution to accelerate planning; and (2) augmenting the demonstration buffer with successful trajectories. To mitigate the sample imbalance in sparse-reward tasks, successful trajectories encountered during training is actively incorporated into the demo buffer.

\begin{figure}[t!]
    \centering
    \includegraphics[width=1.0\linewidth]{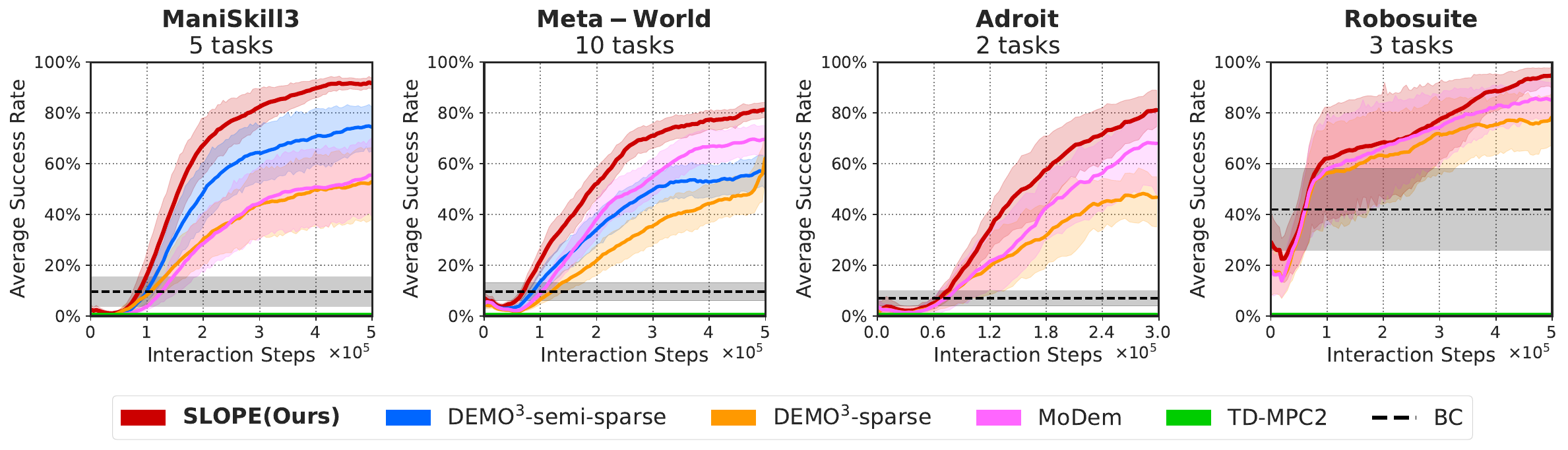}
    \caption{Average success rates across 20 tasks from 4 benchmarks. Curves and shaded areas represent the mean and 95\% confidence intervals (CIs) over 5 independent runs. We include both sparse and semi-sparse tasks for \textit{ManiSkill3} and \textit{Meta-World}, while \textit{RoboSuite} and \textit{Adroit} involve sparse rewards only, following their native task designs. See Appendix \ref{sec:detail_results} for individual task details.}
    \label{fig:average}
    \vskip -0.2in
\end{figure}

\section{Experiments}
\begin{wrapfigure}[17]{r}{0.50\textwidth}
    \vspace{-0.15in} 
    \centering  
    \includegraphics[width=\linewidth, trim={1cm 0cm 1cm 0cm}, clip]{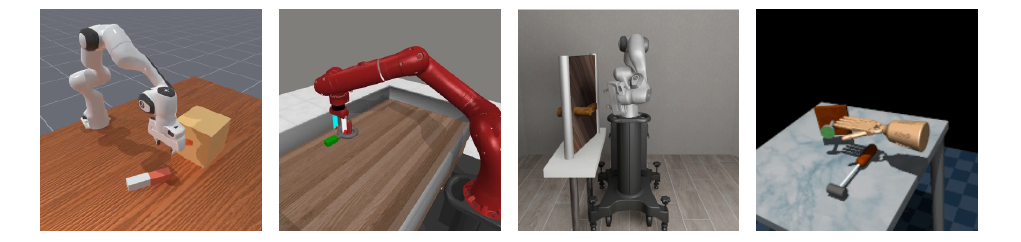}
    \vspace{0.15in} 
    \includegraphics[width=\linewidth]{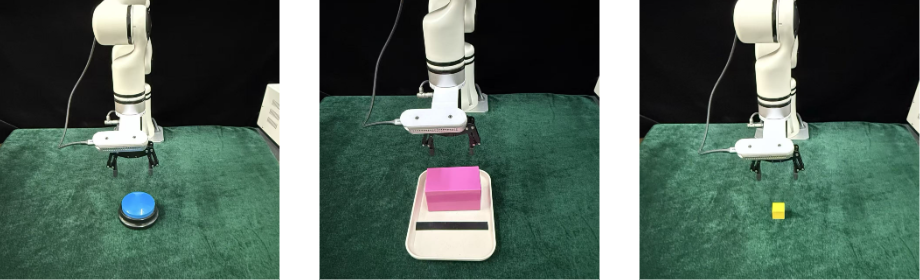}
    \vskip -0.15in
    \caption{Visualizations of representative tasks. \textbf{Top}: Simulation benchmarks including \textit{ManiSkill3} (\texttt{ms}), \textit{Meta-World} (\texttt{mw}), \textit{Robosuite}, and \textit{Adroit}. \textbf{Bottom}: Real-world tasks including Press Button, Push Cube, and Grasp Cube.}
    \label{fig:tasks_vis}
\end{wrapfigure}
We evaluate our method on five continuous control benchmarks and additionally validate it in real-world sparse-reward robotic tasks (see Fig.\ref{fig:tasks_vis}). Our suite includes sparse-reward tasks from \textit{ManiSkill3} \cite{maniskill3} (5 tasks), \textit{Meta-World} \cite{metaworld} (10 tasks, categorized as \texttt{medium}, \texttt{hard}, and \texttt{very hard} by \cite{mwm}), \textit{Robosuite} \cite{robosuite} (3 tasks), and \textit{Adroit} \cite{adroit} (2 tasks), alongside dense-reward tasks from \textit{DeepMind Control Suite} \cite{dmc}. Through these experiments, we aim to investigate the following key questions: (1) \textit{Can our proposed method improve MBRL algorithms under sparse reward settings?} (2) \textit{Can SLOPE learn discriminative reward signals that accurately reflect task progress to better guide policy learning?} (3) \textit{How important are the individual components of SLOPE?}

\subsection{Experimental Settings}
\paragraph{Baselines.} To evaluate the effectiveness of SLOPE, we compare it against several closely related MBRL methods: (1) \textbf{TD-MPC2} \cite{tdmpc2}, a strong and widely adopted baseline known for its scalability and multi-task performance. We focus on its effectiveness under sparse reward settings. (2) \textbf{MoDem} \cite{modem}, a demonstration-augmented MBRL method with a three-phase training pipeline. Originally built on TD-MPC, we reimplement it with TD-MPC2 for fair comparison. MoDem achieves state-of-the-art (SOTA) results on sparse-reward benchmarks like \textit{Meta-World} and \textit{Adroit}, highlighting its sample efficiency from expert demonstrations. (3) \textbf{DEMO${^3}$} \cite{demo3}, An MBRL method that reformulates sparse-reward tasks into manually defined multi-stage sub-tasks with denser intermediate rewards (semi-sparse). DEMO${^3}$ has reported SOTA results on complex manipulation domains such as \textit{ManiSkill3} and \textit{Robosuite}. However, some tasks are inherently single stage or cannot be decomposed into multiple stages. We evaluate DEMO${^3}$ under both semi-sparse (DEMO${^3}$-semi-sparse) and fully sparse (DEMO${^3}$-sparse) settings to assess its adaptability to different reward structures. (4) \textbf{Behavior Cloning} (BC), a supervised imitation learning baseline that directly learns policies from expert demonstrations.

\begin{table}[t!]
    \centering
    \small
    \caption{Comparison of methods on real-world tasks at 15\textit{k} environment steps ($\approx$ 1 hour of real-world training). Results are averaged over 20 evaluation trials. Success Rate and Intervention Rate are reported in percentage (\%). The best results are highlighted in \hl{\textbf{colorbox}}.}
    \label{tab:methods_comparison}
    \setlength{\tabcolsep}{3.0pt}
    \begin{tabular}{lcccccc}
        \toprule
        \multirow{2}{*}{\textbf{Method}} & \multicolumn{2}{c}{\textbf{Press Button}} & \multicolumn{2}{c}{\textbf{Push Cube}} & \multicolumn{2}{c}{\textbf{Grasp Cube}} \\
        \cmidrule(lr){2-3} \cmidrule(lr){4-5} \cmidrule(lr){6-7}
         & Success ($\uparrow$) & Intervention ($\downarrow$) & Success ($\uparrow$) & Intervention ($\downarrow$) & Success  ($\uparrow$) & Intervention ($\downarrow$) \\
        \midrule
        SAC                   & 45.0 & 16.8 & 0.0 & 46.5 & 0.0 & 45.3 \\
        MoDem                 & 65.0 & 10.0 & 15.0 & 18.4  & 35.0 & 24.0 \\
        DEMO${^3}$            & \hl{\textbf{100.0}} & 11.1 & 15.0 & 19.1  & 20.0 & 23.2 \\
        \midrule
        \textbf{SLOPE (Ours)} & \hl{\textbf{100.0}} & \hl{\textbf{7.3}}  & \hl{\textbf{65.0}} & \hl{\textbf{18.2}}  & \hl{\textbf{65.0}} & \hl{\textbf{21.3}}\\
        \bottomrule
    \end{tabular}
    \vskip -0.1in
\end{table}

\paragraph{Reward Settings.} In this work, we consider: (1) \textbf{sparse} rewards, where $r \in \{0,1\}$ is given only upon task completion; (2) \textbf{semi-sparse} rewards, formulated as a stage indicator $r: S \rightarrow \{1,\dots,N\}$; and (3) \textbf{dense} rewards, included as a supplementary benchmark.

\paragraph{Real Robot Settings.} Real-world experiments are conducted using a Realman RM75-B robotic arm equipped with two RGB cameras to capture image observations. Following the Human-in-the-Loop Sample Efficient Robotic Reinforcement Learning (HIL-SERL) pipeline \cite{hil-serl}, we train SLOPE entirely from scratch in the real world without any simulation pre-training. We evaluate our method on three sparse-reward tasks: \textbf{Press Button}, \textbf{Push Cube}, and \textbf{Grasp Cube}. The tasks are illustrated in Fig.\ref{fig:tasks_vis}. Additional details on the real-world implementation are available in the Appendix \ref{appendix:real_robot}.

\subsection{Simulation Experiments Results}
As shown in Fig.\ref{fig:average}, the pure MBRL baseline TD-MPC2 fails across all tasks under sparse reward settings. While the BC policy achieves decent initial performance, it is limited by the amount of available expert data and struggles to improve further. In contrast, SLOPE demonstrates superior performance across all four benchmarks using only sparse rewards. On \textit{ManiSkill3}, it significantly outperforms the previous SOTA, DEMO${^3}$-semi-sparse. Unlike DEMO$^3$-sparse, which suffers a notable drop without handcrafted stage priors, SLOPE consistently surpasses competing methods (including MoDem and DEMO$^3$ variants) on \textit{Meta-World}, \textit{Adroit}, and \textit{Robosuite}. These results verify SLOPE’s ability to learn effectively without dense rewards or stage-wise priors.

\subsection{Real-World Experiments Results}
Table \ref{tab:methods_comparison} demonstrates that SLOPE consistently outperforms baselines in both success and efficiency. On the simple \texttt{Press Button} task, SLOPE matches DEMO$^3$ at 100.0\% success but with significantly lower intervention. In complex tasks, SLOPE attains 65.0\% success, surpassing baselines significantly. Notably, on \texttt{Grasp Cube}, SLOPE significantly exceeds both MoDem (35.0\%) and DEMO$^3$ (20.0\%) while maintaining the lowest intervention rate, whereas SAC fails completely. Collectively, these results validate SLOPE's robustness and sample efficiency in solving complex sparse-reward manipulation problems with minimal human intervention.

\subsection{Further Analysis}
\paragraph{Ablation Study.}To better understand the contribution of each component in SLOPE, we conduct an ablation study comparing the full model against two key variants: (1) \textbf{SLOPE w/o ATS}, which removes the Accelerated Training Strategies to isolate the effect of our core contributions, and (2) \textbf{SLOPE w/o PL\&ODS}, which strips away the Potential Landscape and Optimism-Driven Shaping mechanisms, relying solely on the auxiliary training strategies. The results are shown in Fig.~\ref{fig:additional_results} (Left).

\begin{figure}[t!]
    \centering
    \begin{subfigure}[t]{0.48\linewidth}
        \centering
        \includegraphics[width=\linewidth]{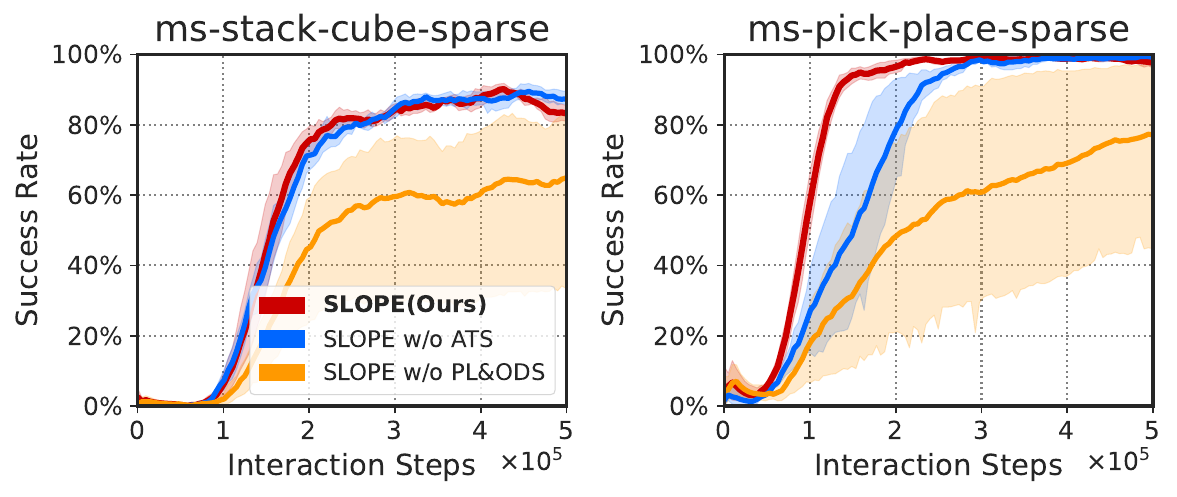}
    \end{subfigure}
    \hfill
    \begin{subfigure}[t]{0.48\linewidth}
        \centering
        \includegraphics[width=\linewidth]{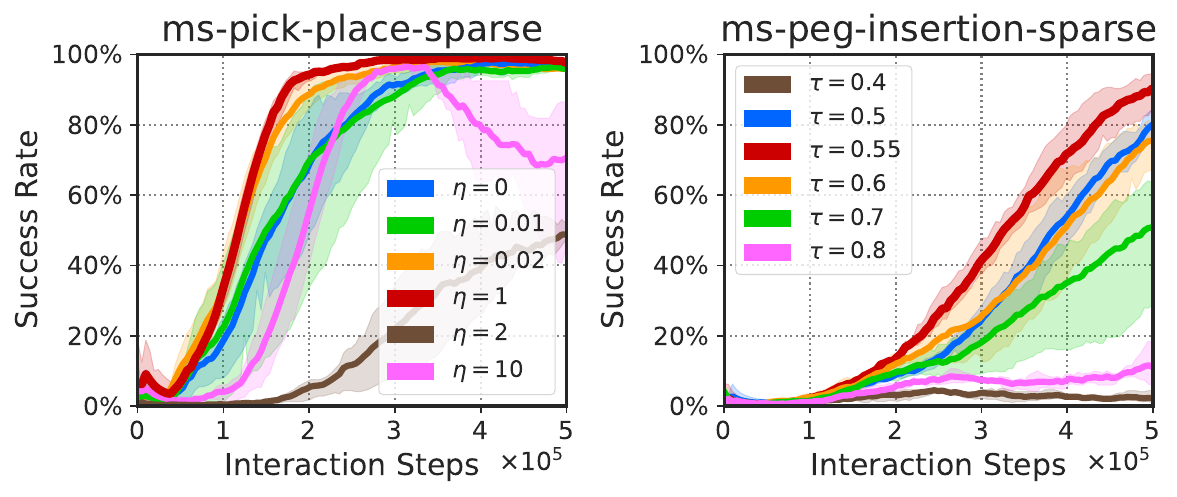}
    \end{subfigure}
   \caption{\textbf{Left}: Ablation study. \textbf{Right}: Hyperparameter sensitivity analysis of SLOPE.}
   \vskip -0.15in
    \label{fig:additional_results}
\end{figure}

Removing potential landscape and optimism-driven shaping mechanisms leads to a severe performance degradation across all tasks, confirming that our core potential-based shaping mechanism is the primary driver for efficient learning under sparse rewards. Conversely, disabling accelerated training strategies only result in a marginal performance decrease, maintaining highly competitive success rates across most tasks. This strongly indicates that our core potential landscape and optimism-driven shaping mechanisms are the primary drivers of the overall performance improvements. Nevertheless, we observe that integrating the auxiliary accelerated training strategies still provide practical benefits by further accelerating convergence and stabilizing the training process.

\paragraph{Hyperparameter Sensitivity.}We evaluate the sensitivity to $\eta$ and $\tau$ in Fig.~\ref{fig:additional_results} (Right). For the shaping weight, we find $\eta = 1.0$ to be the optimal value for providing effective directional guidance. However, performance degrades if $\eta$ continues to increase, as an overly strong shaping signal tends to overshadow the original task reward. Regarding $\tau$, moderate optimism at $\tau = 0.55$ yields the best exploration-exploitation trade-off compared to the unbiased baseline ($\tau = 0.5$). Similarly, excessively high $\tau$ values cause performance drops, as extreme optimism severely biases the value distribution and reduces the reliability of the shaping potential.

\paragraph{Visualization and Analysis}
\begin{wrapfigure}[14]{r}{0.50\textwidth}
    \vspace{-0.2in} 
    \centering  
    \includegraphics[width=\linewidth, trim={0.2cm 0.3cm 0.2cm 0cm}, clip]{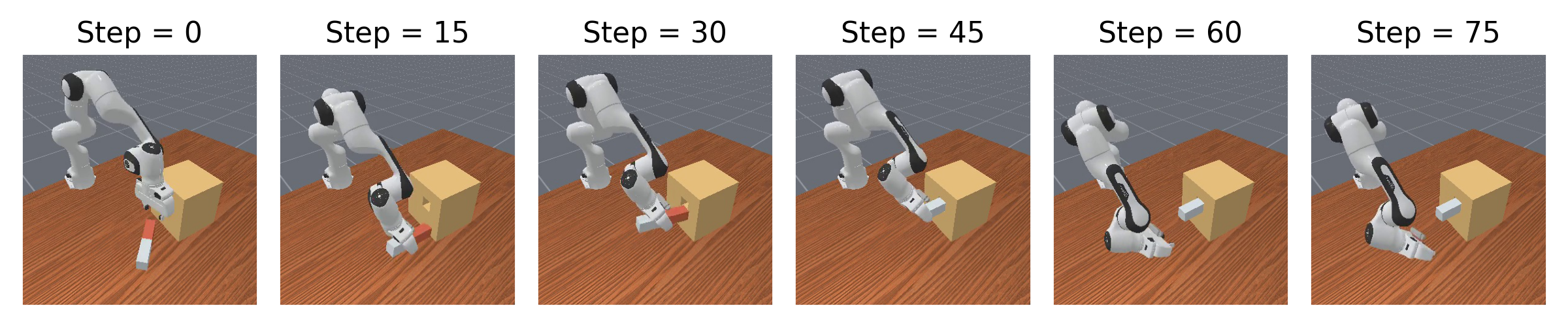}
    \vspace{0.15in} 
    \includegraphics[width=\linewidth]{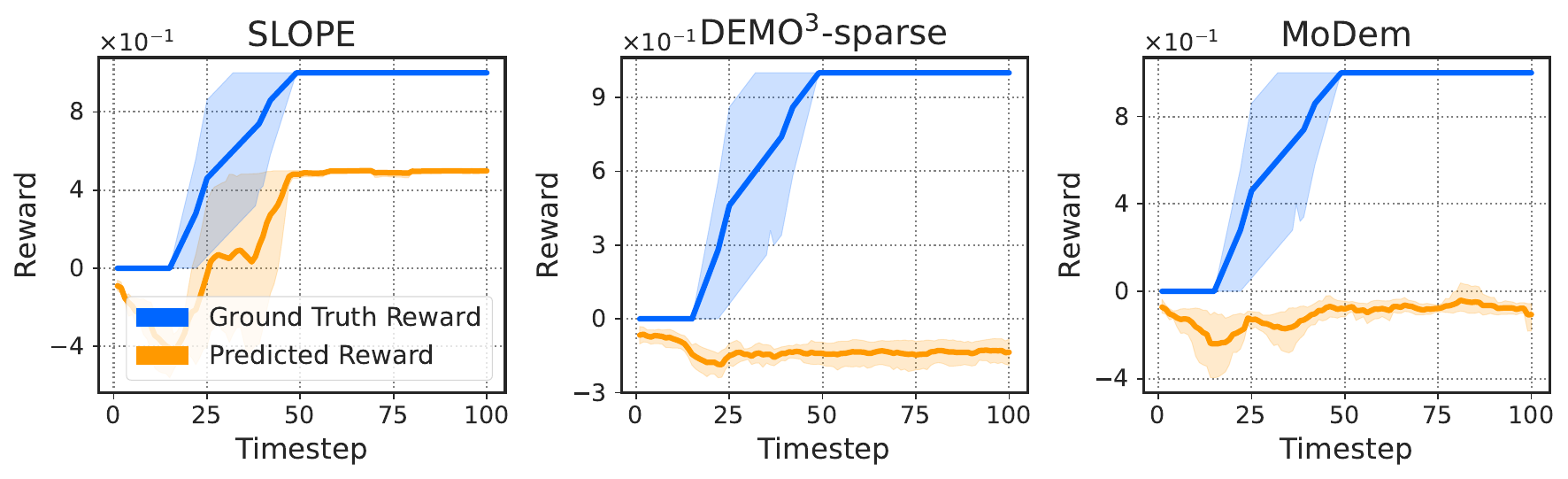}
    \vskip -0.15in
    \caption{Visualization of reward predictions. \textbf{Top:} Keyframes of task execution. \textbf{Bottom:} Ground truth vs. predicted rewards along the trajectory.}
    \label{fig:reward_vis}
\end{wrapfigure}
To address Question (2), we visualize the reward predictions of different algorithms along a successful trajectory in Fig.\ref{fig:reward_vis}. It can be observed that the reward models of MoDem and DEMO$^3$-sparse produce relatively flat predictions, failing to capture the underlying task structure. In contrast, SLOPE demonstrates a strong correlation with the ground truth reward. Crucially, even in the absence of dense supervision, SLOPE's predicted reward exhibits a clear upward trend that aligns with the temporal progression of task completion. This indicates that our method successfully learns a discriminative reward model capable of providing dense, meaningful guidance within sparse-reward tasks.

See Appendix \ref{sec:additional_results} for comprehensive evaluations, including performance on dense reward tasks, transferability to other MBRL algorithms, and further analysis.

\section{Conclusion}
In this paper, we presented SLOPE, a novel framework that enhances MBRL in sparse-reward tasks by integrating optimistic potential landscape construction into the reward modeling. By utilizing the agent’s value estimates as a self-evolving potential function, SLOPE provides dense directional guidance absent in sparse structures. SLOPE also incorporates a distributional value mechanism to generate optimistic upper bounds, effectively preventing the suppression of rare success signals during exploration. Extensive experiments across 5 challenging benchmarks and real-world tasks demonstrate that SLOPE consistently outperforms SOTA methods.

\paragraph{Limitations and Future Work.} One potential limitation of SLOPE is the inherent tension between optimistic exploration and safety constraints, as aggressive potential scaling may inadvertently encourage high-risk behaviors in unvisited, hazardous regions. To address this, future work will extend our framework into Safety-Aware MBRL by incorporating safety-centric priors, such as Control Barrier Functions (CBF), to construct repulsive potential fields.
\clearpage

\bibliographystyle{unsrt}
\bibliography{ref}

\clearpage

\appendix

\section{Proof}
\label{proof}

\theoremstyle{plain}
\newtheorem*{prop_4.2}{Proposition 4.2}
\newtheorem*{prop_4.3}{Proposition 4.3}
\newtheorem*{prop_4.4}{Proposition 4.4}
\theoremstyle{definition}
\theoremstyle{remark}

\subsection{Proof of Proposition \ref{prop:policy_invariance}: Optimal Policy Invariance}
\label{proof:policy_invariance}
\begin{prop_4.2}[Optimal Policy Invariance under Time-Scale Decoupled Shaping]
Let $\widetilde{\mathcal{M}}_k$ be the reshaped MDP at training iteration $k$, where the shaping potential is anchored to the target network: $\Phi_k(\mathbf{s}) = \eta \max_{\mathbf{a}} Q_{\bar{\theta}_k}(\mathbf{s}, \mathbf{a})$. Assuming the target network $\bar{\theta}_k$ remains fixed during the intra-iteration planning and optimization phase, the exact telescoping property of PBRS holds within $\widetilde{\mathcal{M}}_k$. Consequently, while the reward landscape temporally evolves across iterations, the underlying optimal policy of each localized snapshot remains structurally invariant and identical to that of the original MDP $\mathcal{M}$:
\[
\pi^*_{\widetilde{\mathcal{M}}_1} = \pi^*_{\widetilde{\mathcal{M}}_2} = \cdots = \pi^*_{\widetilde{\mathcal{M}}_k} = \cdots = \pi^*_{\mathcal{M}}
\]
\end{prop_4.2}
\begin{proof}
A natural theoretical concern regarding a self-evolving potential is whether the temporal shifts in $\Phi$ across training steps break the telescoping property of standard PBRS. If one were to treat the entire training process as a single, non-stationary MDP, applying the telescoping sum across a sequence where $\Phi$ shifts from $k$ to $k+1$ within a single trajectory, the cancellation would indeed fall apart. 

To overcome this, we provide a rigorous guarantee by explicitly decoupling the learning timescales. Rather than modeling a single shifting MDP, our target-network formulation ensures the agent navigates a sequence of distinct, stationary MDP snapshots: $\widetilde{\mathcal{M}}_1, \widetilde{\mathcal{M}}_2, \dots, \widetilde{\mathcal{M}}_k, \dots$. The proof proceeds in two steps:

\textbf{Step 1: The Intra-Iteration Snapshot.} 
Within any specific optimization step or MPPI planning horizon at iteration $k$, the target network $\bar{\theta}_k$ is strictly frozen (or evolves at a negligibly slow rate via EMA). The agent therefore operates entirely within the stationary snapshot $\widetilde{\mathcal{M}}_k = \left< \mathcal{S}, \mathcal{A}, \widetilde{r}_k, \mathcal{P}, \gamma \right>$, where the reshaped reward is defined as:
\[
\widetilde{r}_k(\mathbf{s}, \mathbf{a}) = r(\mathbf{s}, \mathbf{a}) + \gamma \mathbb{E}_{\mathbf{s}'}[\Phi_k(\mathbf{s}')] - \Phi_k(\mathbf{s})
\]
Because the potential $\Phi_k$ remains constant throughout this localized rollout, the cumulative shaped reward along any imagined trajectory collapses analytically via perfect telescoping sum. By the standard PBRS theorem \citep{ng_shaping}, this guarantees that the optimal policy of this specific local snapshot is strictly identical to the original MDP:
\[
\pi^*_{\widetilde{\mathcal{M}}_k} = \pi^*_{\mathcal{M}}, \quad \forall k \ge 1
\]

\textbf{Step 2: The Inter-Iteration Chain of Equivalence.} 
As training progresses, the EMA update of the target network slowly shifts the reward topography, transitioning the agent from snapshot $\widetilde{\mathcal{M}}_k$ to $\widetilde{\mathcal{M}}_{k+1}$. While the actual value landscape empirically evolves, Step 1 guarantees that the underlying optimal policy of each localized snapshot remains strictly anchored to $\pi^*_{\mathcal{M}}$. We thus establish a theoretical chain of equivalence across the sequence of these target environments:
\[
\pi^*_{\widetilde{\mathcal{M}}_1} = \pi^*_{\widetilde{\mathcal{M}}_2} = \cdots = \pi^*_{\widetilde{\mathcal{M}}_k} = \cdots = \pi^*_{\mathcal{M}}
\]
\end{proof}


\subsection{Proof of Proposition \ref{prop:contraction_target}: Contraction Mapping}
\label{proof:prop_convergence}
\begin{lemma}[Non-expansiveness of the Max Operator]
\label{lemma:dist_nonexpansive}
Let $Q(\mathbf{s}, \mathbf{a})$ be a scalar value function. The max operator over the action space is non-expansive in the $L_\infty$ norm. Specifically, for any two scalar value functions $Q_1$ and $Q_2$:
\[
\left\| \max_{\mathbf{a}} Q_1(\cdot, \mathbf{a}) - \max_{\mathbf{a}} Q_2(\cdot, \mathbf{a}) \right\|_\infty \le \| Q_1 - Q_2 \|_\infty.
\]
\end{lemma}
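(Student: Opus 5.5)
We prove the bound pointwise in $\mathbf{s}$ and then take the supremum over $\mathbf{s}$. Fix an arbitrary state $\mathbf{s}$. The key elementary fact is that for any two bounded functions $f,g$ on $\mathcal{A}$,
\[
\left| \sup_{\mathbf{a}} f(\mathbf{a}) - \sup_{\mathbf{a}} g(\mathbf{a}) \right| \le \sup_{\mathbf{a}} |f(\mathbf{a}) - g(\mathbf{a})|.
\]
We establish this inequality and apply it with $f = Q_1(\mathbf{s},\cdot)$ and $g = Q_2(\mathbf{s},\cdot)$.

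To prove the inequality, we first bound one side. For every $\mathbf{a}$ we have $f(\mathbf{a}) = g(\mathbf{a}) + (f(\mathbf{a}) - g(\mathbf{a})) \le \sup_{\mathbf{a}'} g(\mathbf{a}') + \sup_{\mathbf{a}'}|f(\mathbf{a}')-g(\mathbf{a}')|$. Taking the supremum over $\mathbf{a}$ on the left gives
\[
\sup f - \sup g \le \sup|f-g|.
\]
Exchanging the roles of $f$ and $g$ gives the reverse inequality, and the two together yield the absolute-value bound. Writing the argument with suprema means we never need the maximum to be attained. This matters if $\mathcal{A}$ is continuous; when the maximizers do exist, one can instead pick $\mathbf{a}_1^* \in \arg\max Q_1(\mathbf{s},\cdot)$ and bound $\max Q_1 - \max Q_2 \le Q_1(\mathbf{s},\mathbf{a}_1^*) - Q_2(\mathbf{s},\mathbf{a}_1^*)$.

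Finally, we note that $\sup_{\mathbf{a}} |Q_1(\mathbf{s},\mathbf{a}) - Q_2(\mathbf{s},\mathbf{a})| \le \|Q_1 - Q_2\|_\infty$, where the $L_\infty$ norm is the supremum over all state-action pairs. Taking the supremum over $\mathbf{s}$ of the left-hand side then gives exactly the claimed inequality.

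There is no real obstacle here. The only points needing care are (i) stating the boundedness of $Q_1$ and $Q_2$, so that the suprema are finite (this follows from bounded rewards and $\gamma<1$, or from the finite support bins of the categorical value parameterization), and (ii) remembering that we apply the lemma to the scalar expectations $\mathbb{E}[Q_\theta]$, not to distributions. This is the form needed later, when we bound $\|\Phi_1-\Phi_2\|_\infty$ and the $\max_{\mathbf{a}'}$ term in $\mathcal{T}_\Phi$ in the proof of Proposition~\ref{prop:contraction_target}.
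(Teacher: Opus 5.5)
Your proof is correct and follows essentially the paper's route: the paper fixes $\mathbf{s}$, evaluates both functions at a maximizer $\mathbf{a}_1^*$ of the larger one to get $\max_{\mathbf{a}} Q_1(\mathbf{s},\mathbf{a}) - \max_{\mathbf{a}} Q_2(\mathbf{s},\mathbf{a}) \le Q_1(\mathbf{s},\mathbf{a}_1^*) - Q_2(\mathbf{s},\mathbf{a}_1^*)$, and handles the other case by symmetry. Your supremum formulation is a mild refinement that drops the implicit assumption that the maxima are attained (relevant for continuous $\mathcal{A}$), and your last step correctly writes $\sup_{\mathbf{a}} |Q_1(\mathbf{s},\mathbf{a}) - Q_2(\mathbf{s},\mathbf{a})| \le \|Q_1 - Q_2\|_\infty$ as an inequality, whereas the paper writes an equality there.
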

\begin{proof}
For any state $\mathbf{s}$, let $\mathbf{a}_1^* = \arg\max_{\mathbf{a}} Q_1(\mathbf{s}, \mathbf{a})$ and $\mathbf{a}_2^* = \arg\max_{\mathbf{a}} Q_2(\mathbf{s}, \mathbf{a})$. Without loss of generality, assume $Q_1(\mathbf{s}, \mathbf{a}_1^*) \ge Q_2(\mathbf{s}, \mathbf{a}_2^*)$. Then:
\[
\begin{aligned}
\max_{\mathbf{a}} Q_1(\mathbf{s}, \mathbf{a}) - \max_{\mathbf{a}} Q_2(\mathbf{s}, \mathbf{a}) &= Q_1(\mathbf{s}, \mathbf{a}_1^*) - Q_2(\mathbf{s}, \mathbf{a}_2^*) \\
&\le Q_1(\mathbf{s}, \mathbf{a}_1^*) - Q_2(\mathbf{s}, \mathbf{a}_1^*) \\
&\le \max_{\mathbf{a}} |Q_1(\mathbf{s}, \mathbf{a}) - Q_2(\mathbf{s}, \mathbf{a})| \\
&= \|Q_1 - Q_2\|_\infty.
\end{aligned}
\]
The case where $Q_2(\mathbf{s}, \mathbf{a}_2^*) > Q_1(\mathbf{s}, \mathbf{a}_1^*)$ follows by symmetry. Thus, the operation is non-expansive.

\end{proof}

\begin{prop_4.3}[Contraction Mapping under Target-Anchored Shaping]
Consider the reshaped MDP $\widetilde{\mathcal{M}} = \left< \mathcal{S}, \mathcal{A}, \widetilde{r}, \mathcal{P}, \gamma \right>$ with the reward dynamically modified by the target-anchored potential function. Let $\Phi(\mathbf{s}) = \eta \max_{\mathbf{a}} Q_{\bar{\theta}}(\mathbf{s}, \mathbf{a})$, where $Q_{\bar{\theta}}$ is a target network held fixed during the current planning and optimization step. The reshaped reward is defined as:
\[
\widetilde{r}_{\Phi}(\mathbf{s}, \mathbf{a}) = r(\mathbf{s}, \mathbf{a}) + \gamma \mathbb{E}_{\mathbf{s}' \sim \mathcal{P}}[\Phi(\mathbf{s}')] - \Phi(\mathbf{s})
\]
For any scaling factor $\eta > 0$, the Bellman Operator $\mathcal{T}_{\Phi}$ applied to this target-anchored reshaped MDP is a strict $\gamma$-contraction in the $L_\infty$ norm.
\end{prop_4.3}

\begin{proof}
Let $Q_1$ and $Q_2$ be two arbitrary scalar online $Q$-functions being optimized. The target-anchored Bellman operator $\mathcal{T}_{\Phi}$ is defined as:
\[
\mathcal{T}_{\Phi} Q_i(\mathbf{s}, \mathbf{a}) = \widetilde{r}_{\Phi}(\mathbf{s}, \mathbf{a}) + \gamma \mathbb{E}_{\mathbf{s}' \sim \mathcal{P}} \left[ \max_{\mathbf{a}'} Q_i(\mathbf{s}', \mathbf{a}') \right] \quad \text{for } i \in \{1, 2\}
\]

\textbf{Step 1: Independence of the Reshaped Reward.}
Crucially, because the potential function $\Phi$ is derived exclusively from the target network $Q_{\bar{\theta}}$ (which is held fixed during the current optimization step), it is completely independent of the online networks $Q_1$ and $Q_2$. Consequently, the reshaped reward $\widetilde{r}_{\Phi}(\mathbf{s}, \mathbf{a})$ acts as a stationary scalar for both evaluations, yielding a zero difference:
\[
\Delta \widetilde{r} = \widetilde{r}_{\Phi}(\mathbf{s}, \mathbf{a})\big|_{Q=Q_1} - \widetilde{r}_{\Phi}(\mathbf{s}, \mathbf{a})\big|_{Q=Q_2} = 0
\]
This isolates the contraction analysis to the discounted future value terms.

\textbf{Step 2: Bounding the Bellman Update Difference.}
We now evaluate the absolute difference between the Bellman updates for $Q_1$ and $Q_2$ at any state-action pair $(\mathbf{s}, \mathbf{a})$. By leveraging the linearity of expectation, the triangle inequality for integrals, and the non-expansiveness of the max operator (Lemma \ref{lemma:dist_nonexpansive}), we establish the following continuous bound:
\[
\begin{aligned}
\left| \mathcal{T}_{\Phi} Q_1(\mathbf{s}, \mathbf{a}) - \mathcal{T}_{\Phi} Q_2(\mathbf{s}, \mathbf{a}) \right| 
&= \bigg| \left( \widetilde{r}_{\Phi}(\mathbf{s}, \mathbf{a}) + \gamma \mathbb{E}_{\mathbf{s}'} \left[ \max_{\mathbf{a}'} Q_1(\mathbf{s}', \mathbf{a}') \right] \right) \\
&\qquad - \left( \widetilde{r}_{\Phi}(\mathbf{s}, \mathbf{a}) + \gamma \mathbb{E}_{\mathbf{s}'} \left[ \max_{\mathbf{a}'} Q_2(\mathbf{s}', \mathbf{a}') \right] \right) \bigg| \\
&= \gamma \left| \mathbb{E}_{\mathbf{s}'} \left[ \max_{\mathbf{a}'} Q_1(\mathbf{s}', \mathbf{a}') - \max_{\mathbf{a}'} Q_2(\mathbf{s}', \mathbf{a}') \right] \right| \\
&\le \gamma \mathbb{E}_{\mathbf{s}'} \left[ \left| \max_{\mathbf{a}'} Q_1(\mathbf{s}', \mathbf{a}') - \max_{\mathbf{a}'} Q_2(\mathbf{s}', \mathbf{a}') \right| \right] \\
&\le \gamma \mathbb{E}_{\mathbf{s}'} \left[ \|Q_1 - Q_2\|_\infty \right] \\
&= \gamma \|Q_1 - Q_2\|_\infty
\end{aligned}
\]
Taking the supremum over all $(\mathbf{s}, \mathbf{a})$, we obtain the global contraction bound:
\[
\|\mathcal{T}_{\Phi} Q_1 - \mathcal{T}_{\Phi} Q_2\|_\infty \le \gamma \|Q_1 - Q_2\|_\infty
\]
Since the discount factor $\gamma \in [0, 1)$, the operator $\mathcal{T}_{\Phi}$ is a strict $\gamma$-contraction mapping under the $L_\infty$ norm, regardless of the scaling factor $\eta$. By Banach's Fixed-Point Theorem, repeated applications of $\mathcal{T}_{\Phi}$ will converge to a unique fixed point for the current target-anchored landscape.

\end{proof}

\subsection{Proof of Proposition \ref{prop:policy_improvement}: Consistent Policy Improvement}
\label{proof:prop_improvement}
\begin{prop_4.4}[Consistent Policy Improvement]
Let $\pi_{k+1}$ be derived greedily from the reshaped landscape, i.e., $\pi_{k+1}(\mathbf{s}) = \arg\max_{\mathbf{a}} Q_k^{\pi_k}(\mathbf{s}, \mathbf{a})$. Then, this update monotonically improves the policy not only in the reshaped MDP $\widetilde{\mathcal{M}}_k$ but also with respect to the original objective $\mathcal{M}$, such that $Q_{\mathcal{M}}^{\pi_{k+1}}(\mathbf{s}, \mathbf{a}) \ge Q_{\mathcal{M}}^{\pi_k}(\mathbf{s}, \mathbf{a})$.
\end{prop_4.4}

\begin{proof}
Let $\mathcal{T}_k^{\pi_k}$ denote the Bellman evaluation operator for policy $\pi_k$ in the reshaped MDP snapshot $\widetilde{\mathcal{M}}_k$. By definition, the action-value function $Q_k^{\pi_k}$ is the fixed point of this operator:
\[
Q_k^{\pi_k}(\mathbf{s}, \mathbf{a}) = \mathcal{T}_k^{\pi_k} Q_k^{\pi_k}(\mathbf{s}, \mathbf{a}) = \widetilde{r}_k(\mathbf{s}, \mathbf{a}) + \gamma \mathbb{E}_{\mathbf{s}' \sim \mathcal{P}} \left[ Q_k^{\pi_k}(\mathbf{s}', \pi_k(\mathbf{s}')) \right]
\]

\textbf{Step 1: Local Policy Improvement in $\widetilde{\mathcal{M}}_k$.}
Consider a new policy $\pi_{k+1}$ that is perfectly greedy with respect to $Q_k^{\pi_k}$. By the definition of the max operator, for any state $\mathbf{s}'$, we have:
\[
Q_k^{\pi_k}(\mathbf{s}', \pi_{k+1}(\mathbf{s}')) = \max_{\mathbf{a}' \in \mathcal{A}} Q_k^{\pi_k}(\mathbf{s}', \mathbf{a}') \ge Q_k^{\pi_k}(\mathbf{s}', \pi_k(\mathbf{s}'))
\]
Applying the Bellman evaluation operator $\mathcal{T}_k^{\pi_{k+1}}$ to $Q_k^{\pi_k}$ and leveraging the monotonicity of the expectation, we obtain:
\[
\begin{aligned}
\mathcal{T}_k^{\pi_{k+1}} Q_k^{\pi_k}(\mathbf{s}, \mathbf{a}) 
&= \widetilde{r}_k(\mathbf{s}, \mathbf{a}) + \gamma \mathbb{E}_{\mathbf{s}' \sim \mathcal{P}} \left[ Q_k^{\pi_k}(\mathbf{s}', \pi_{k+1}(\mathbf{s}')) \right] \\
&\ge \widetilde{r}_k(\mathbf{s}, \mathbf{a}) + \gamma \mathbb{E}_{\mathbf{s}' \sim \mathcal{P}} \left[ Q_k^{\pi_k}(\mathbf{s}', \pi_k(\mathbf{s}')) \right] = Q_k^{\pi_k}(\mathbf{s}, \mathbf{a})
\end{aligned}
\]
By induction, repeated applications of the monotonic operator yield $(\mathcal{T}_k^{\pi_{k+1}})^n Q_k^{\pi_k} \ge Q_k^{\pi_k}$ for all $n \ge 1$. Since $\mathcal{T}_k^{\pi_{k+1}}$ is a strict $\gamma$-contraction in $L_\infty$ norm within the stationary snapshot $\widetilde{\mathcal{M}}_k$, the sequence converges to the unique fixed point $Q_k^{\pi_{k+1}}$, establishing the improvement in the reshaped landscape:
\[
Q_k^{\pi_{k+1}}(\mathbf{s}, \mathbf{a}) \ge Q_k^{\pi_k}(\mathbf{s}, \mathbf{a})
\]

\textbf{Step 2: Alignment with the Original Objective $\mathcal{M}$.}
To ensure this improvement is not a byproduct of shaping bias, we invoke the action-ordering invariance of PBRS. The relationship between the reshaped value and the original value is $Q_k^\pi(\mathbf{s}, \mathbf{a}) = Q_{\mathcal{M}}^\pi(\mathbf{s}, \mathbf{a}) - \Phi_k(\mathbf{s})$. In our framework, the potential function is instantiated as the state-value estimate from the target network:
\[
\Phi_k(\mathbf{s}) = \eta \max_{\mathbf{a}' \in \mathcal{A}} Q_{\bar{\theta}_k}(\mathbf{s}, \mathbf{a}').
\]
Crucially, although the underlying target network $Q_{\bar{\theta}_k}$ is a function of both state and action, the $\max$ operator identifies the supremum over the entire action manifold $\mathcal{A}$ for a given state $\mathbf{s}$. Consequently, the potential $\Phi_k(\mathbf{s})$ depends solely on the state topology and is strictly independent of the specific action $\mathbf{a}$ being evaluated in the current Bellman update.

The magnitude of value improvement between any two policies $\pi_{k+1}$ and $\pi_k$ is preserved exactly:
\[
Q_k^{\pi_{k+1}}(\mathbf{s}, \mathbf{a}) - Q_k^{\pi_k}(\mathbf{s}, \mathbf{a}) = Q_{\mathcal{M}}^{\pi_{k+1}}(\mathbf{s}, \mathbf{a}) - Q_{\mathcal{M}}^{\pi_k}(\mathbf{s}, \mathbf{a})
\]
Thus, the non-negative improvement proven in Step 1 directly implies $Q_{\mathcal{M}}^{\pi_{k+1}}(\mathbf{s}, \mathbf{a}) \ge Q_{\mathcal{M}}^{\pi_k}(\mathbf{s}, \mathbf{a})$.

\end{proof}

\clearpage

\section{Implementation Details}

\subsection{Hyperparameters}
The majority of hyperparameters in our experiments are inherited from baseline algorithm TD-MPC2 and DEMO$^3$, to ensure a fair comparison and reproducibility. For consistency, we retain the default settings for most optimization, model, and policy-related parameters as adopted in previous works. In this section, we highlight only the hyperparameters that are most pertinent to our proposed method, such as those related to reward shaping, quantile selection. A detailed listing of these key hyperparameters is provided in Table~\ref{tab:hyperparams}. For all remaining parameters and further implementation details, we refer readers to the original papers of TD-MPC2. 

\begin{table}[h]
\centering
\vskip -0.05in
\caption{SLOPE's hyperparameters.}
\setlength{\tabcolsep}{10pt} 
\begin{tabular}{ll}
\midrule 
\textbf{Hyperparameter} & \textbf{Value} \\ 
\midrule 
\rowcolor{bluebg}
\textbf{\textcolor{blue}{\underline{SLOPE}}}           &                \\
\rowcolor{bluebg}
Potential weight $\eta$       & 1.0           \\
\rowcolor{bluebg}
Quantile coefficient $\tau$         & 0.55            \\ 
\midrule 
\textbf{\textcolor{blue}{\underline{TD-MPC2}}}      &                \\
Planning horizon $H$      & 3              \\
MPPI iterations      & Adroit: 8, others: 6              \\
Interaction steps      & 500,000              \\
Replay buffer capacity    &300,000    \\
Sampling                  &Uniform  \\
Demo. sampling ratio    &50\%   \\
Eval episodes      & 10              \\
Num of parallel envs      & 16              \\
Steps per update      & 4              \\
Reward prediction coef. &0.1   \\
Value prediction coef. &0.1   \\
Joint-embedding coef.  &20   \\
Temporal coef. $\lambda$ &0.5   \\
Latent state dim        & 512            \\
Batch size              & 256            \\
Optimizer               & Adam           \\
Learning rate           & $3\times 10^{-4}$              \\
Encoder learning rate   & $1\times 10^{-4}$              \\
Discount factor $\gamma$         & 0.95          \\ 
\midrule 
\end{tabular}
\label{tab:hyperparams}
\end{table}

\subsection{Environment Details} 
In this section, we provide a comprehensive overview of the four benchmark environments utilized in our experiments: \textit{ManiSkill3}, \textit{Meta-World}, \textit{Adroit}, \textit{Robosuite}, and \textit{Real World} experiments. The key characteristics and configurations of these environments are summarized in the following Table \ref{table:env_details}.

\begin{table}[H]
\centering
\vskip -0.05in
\caption{Summary of environment details for \textit{ManiSkill3}, \textit{Meta-World}, \textit{Adroit}, \textit{Robosuite}, and \textit{Real World} experiments. Prop.state stands for proprioceptive state of the robotic arm.}
\resizebox{\linewidth}{!}{
\begin{tabular}{l|ccccc}
\midrule 
                           & \textbf{ManiSkill3}        & \textbf{Meta-World} & \textbf{Adroit}            & \textbf{Robosuite} & \textbf{Real World} \\ 
                           \midrule 
\textbf{Time Horizon}      & 100                        & 100                 & 100                        & 100                & Variable            \\
\textbf{Image Size}        & 128$\times$128             & 224$\times$224      & 224$\times$224             & 128$\times$128     & 128$\times$128      \\
\textbf{Observations}      & RGB$\times$2 + Prop.state  & RGB + Prop.state    & RGB$\times$2 + Prop.state  & RGB$\times$2       & RGB$\times$2 + Prop.state        \\
\textbf{Cameras}           & Hand + Front               & Front               & Front                      & Hand + Front       & Hand + Side        \\
\textbf{Action Repeat} & 2         & 2                   & 2                          & 1                  & 1                   \\
\textbf{Action Dim}        & 7                          & 4                   & Door: 28, Hammer: 26       & 7                  & 4                   \\ \hline
\end{tabular}
}
\label{table:env_details}
\vskip -0.1in
\end{table}

\subsection{Demonstrations}
The demonstration data for all simulation environments used in our experiments are collected by training a TD-MPC2 agent with access to dense rewards and full state observations. The real-world demonstration data was collected by humans using a SpaceMouse remote control. Each demonstration consists of a fixed-length trajectory generated by the expert policy. The specific number of demonstrations employed for each task is summarized in Table~\ref{tab:demo-num}.

\begin{table}[htbp]
\centering
\caption{Number of demonstration trajectories used for each task in our experiments.}
\begin{tabular}{llc}
\midrule 
\textbf{Domain}                      & \textbf{Task}             & \textbf{Number of Demonstrations} \\ 
\midrule 
\multirow{5}{*}{ManiSkill3} & Peg Insertion    & 100                      \\
                            & Pick Place       & 100                      \\
                            & Stack Cube       & 25                       \\
                            & Poke Cube        & 5                        \\
                            & Lift Peg Upright & 5                        \\ \hline
\multirow{10}{*}{Meta-World} & Assembly         & 5                        \\
                            & Peg Insert Side  & 5                        \\
                            & Stick Push       & 5                        \\
                            & Stick Pull       & 5                        \\
                            & Hand Insert       & 5                        \\
                            & Pick Out of Hole       & 5                        \\ 
                            & Pick Place Wall       & 5                        \\  
                            & Push Back       & 5                        \\ 
                            & Push       & 5                        \\ 
                            & Shelf Place       & 5                        \\ \hline
\multirow{2}{*}{Adroit}     & Door             & 5                        \\
                            & Hammer           & 5                        \\ \hline
\multirow{3}{*}{Robosuite}  & Lift             & 5                        \\
                            & Door             & 10                       \\
                            & Stack Blocks     & 100                      \\ \hline
\multirow{3}{*}{Real-World Experiments}  & Press Button             & 1                        \\
                            & Push Cube             & 1                       \\
                            & Grasp Cube     & 10                      \\ 
                            \midrule 
\end{tabular}
\label{tab:demo-num}
\end{table}

\subsection{Real-Robot Implementation Details}
\label{appendix:real_robot}

\paragraph{Hardware Setup.} As shown in Fig.\ref{fig:hardware}, we use a Realman RM75-B 7-DoF robotic arm equipped with a gripper. The visual perception system consists of two RGB cameras: one is mounted on the wrist (eye-in-hand) to provide egocentric views, while the other is fixed at the side to offer a third-person perspective. Both cameras capture RGB images, which are subsequently cropped to a resolution of $128 \times 128$.

\begin{wrapfigure}[12]{r}{0.30\textwidth}
\centering
\vskip -0.2in
\includegraphics[width=0.30\textwidth]{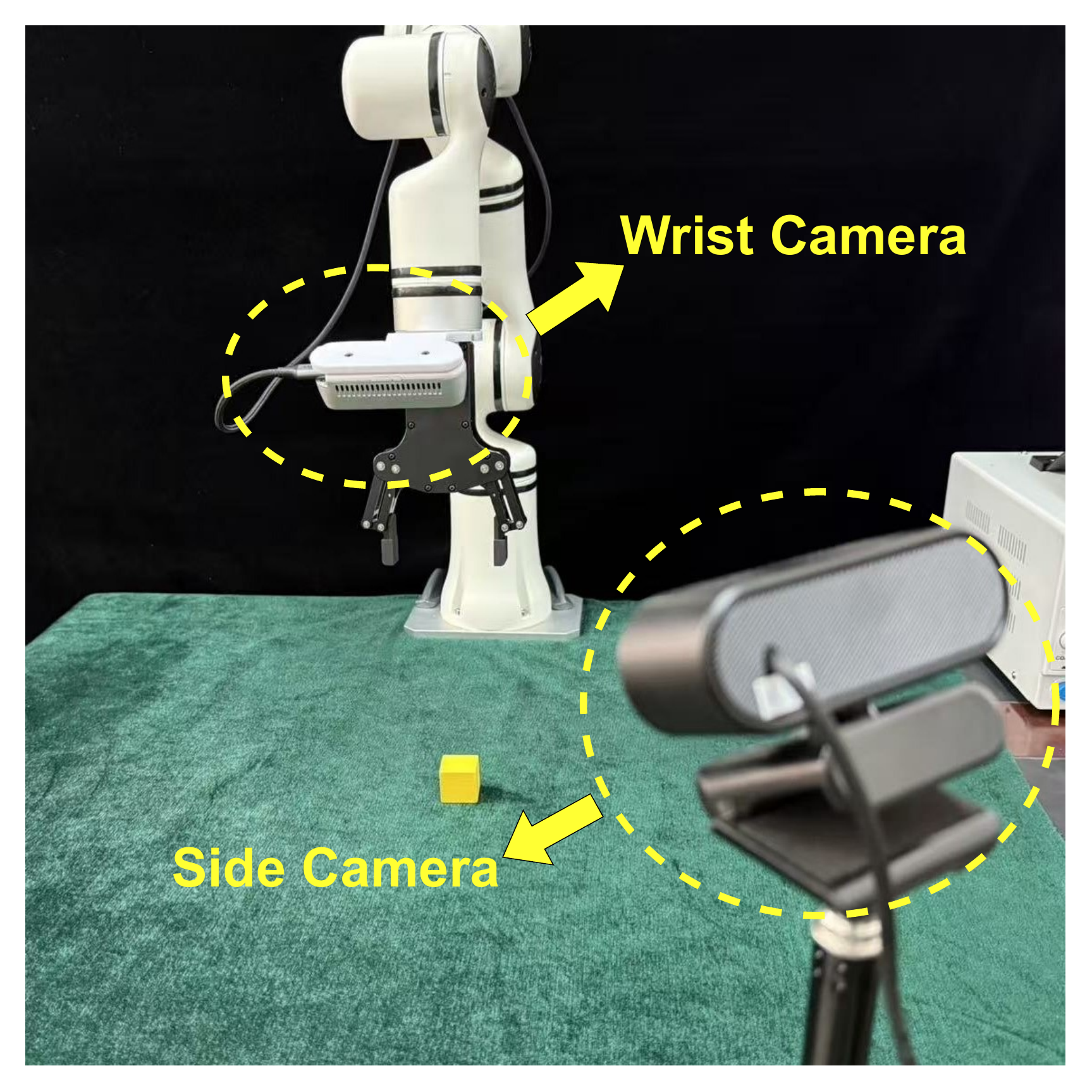}
\caption{Real-robot setup.}
\vskip -0.2in
\label{fig:hardware}
\end{wrapfigure}

\paragraph{Action Space and Controller.} The policy outputs a 4-dimensional action vector $a_t \in \mathbb{R}^4$, consisting of the end-effector's relative displacement $(\Delta x, \Delta y, \Delta z)$ and the gripper status.

\begin{itemize}
    \item \textbf{Remote Control:} We employ a 3Dconnexion SpaceMouse for teleoperation. The human operator controls the end-effector's translational velocity.
    
    \item \textbf{Low-level Control:} The target Cartesian pose is converted to joint positions using an analytical Inverse Kinematics (IK) solver. The commands are sent to the robot at a frequency of 10Hz, with a safety layer that clips joint velocities and confines the end-effector within a predefined workspace boundaries.
\end{itemize}

\begin{itemize}
    \item  \textbf{Smoothed Action:} To ensure smooth robot motion and mitigate high-frequency jitter, we apply an Exponential Moving Average (EMA) filter to the policy's output actions. Let $a_t = [a_{t, \text{trans}}, a_{t, \text{grip}}]$ denote the raw action at time $t$, where $a_{t, \text{trans}} \in \mathbb{R}^3$ represents the translational displacement and $a_{t, \text{grip}} \in \mathbb{R}^1$ is the gripper command. The smoothed action $\bar{a}_t$ is computed as:
    \[
    \bar{a}_{t, \text{trans}} = \alpha \cdot a_{t, \text{trans}} + (1-\alpha) \cdot \bar{a}_{t-1, \text{trans}}, \quad \bar{a}_{t, \text{grip}} = a_{t, \text{grip}}
    \]
    where $\alpha \in [0, 1]$ is the smoothing factor. This formulation dampens control noise in the trajectory while preserving the immediate responsiveness of gripper actuation.
\end{itemize}

\paragraph{Hybrid Action Space.} The robotic manipulation task involves a hybrid action space $\mathcal{A} = \mathcal{A}_{\text{arm}} \times \mathcal{A}_{\text{grip}}$, where the arm motion $\mathcal{A}_{\text{arm}} \subseteq \mathbb{R}^3$ is continuous, but the gripper actuation $\mathcal{A}_{\text{grip}} \in \{-1, 1\}$ is discrete (binary). Standard MPPI assumes a fully continuous space with Gaussian noise, which is ill-suited for binary controls. We implement a specific strategy to bridge this gap:

\begin{itemize}
        \item \textbf{Policy Discretization via STE:} To enable end-to-end training of the policy network $\pi_\theta$ despite the non-differentiable binary output, we utilize the Straight-Through Estimator (STE). During the forward pass, the gripper action is binarized using a hard threshold function $a_{\text{grip}} = \text{sign}(a_{\text{raw}})$. During the backward pass, gradients are propagated through the continuous $a_{\text{raw}}$ (using the identity function for the derivative), effectively treating the threshold as a transparent layer.
        
        \item \textbf{Quantized MPPI Sampling:} During the planning phase, we modify the sampling procedure of the Model Predictive Path Integral (MPPI). While we sample candidate trajectories from a Gaussian distribution $\mathcal{N}(\mu, \Sigma)$, we enforce a hard quantization on the gripper dimension immediately after sampling:
        \[
        a_{t, \text{grip}}^{(i)} \leftarrow \text{sign}(a_{t, \text{grip}}^{(i)}), \quad \forall i \in \{1, \dots, N\}
        \]
        This ensures that the latent dynamics model predicts future states conditional on valid discrete gripper commands, reducing prediction error caused by out-of-distribution continuous values (e.g., $0.0$).
        
        \item \textbf{Distribution Projection:} When updating the sampling distribution parameters via the Cross-Entropy Method (CEM), we project the updated mean of the gripper dimension back to the binary manifold. Specifically, after computing the weighted mean $\mu_{\text{new}}$ from elite trajectories, we apply $\mu_{\text{new}, \text{grip}} \leftarrow \text{sign}(\mu_{\text{new}, \text{grip}})$. This prevents the sampling distribution from collapsing towards zero (ambiguous state) and maintains distinct modes for ``open" and ``close" actions throughout the planning horizon.
\end{itemize}

\paragraph{Task Settings.}
We design three sparse-reward manipulation tasks of increasing complexity to evaluate the efficacy of our method in real-world scenarios.

\begin{minipage}[c]{0.86\textwidth}
\begin{itemize}
    \item \texttt{Press Button(Easy)}: The goal is to control the end-effector to press a specific button on the table. It is a \textit{sparse-reward} task where the agent receives a success signal ($r=1$) only when the button is fully depressed. This task requires precise end-effector positioning and force application.
\end{itemize}
\end{minipage}
\hfill
\begin{minipage}[c]{0.1\textwidth} 
    \centering
    \includegraphics[width=\linewidth]{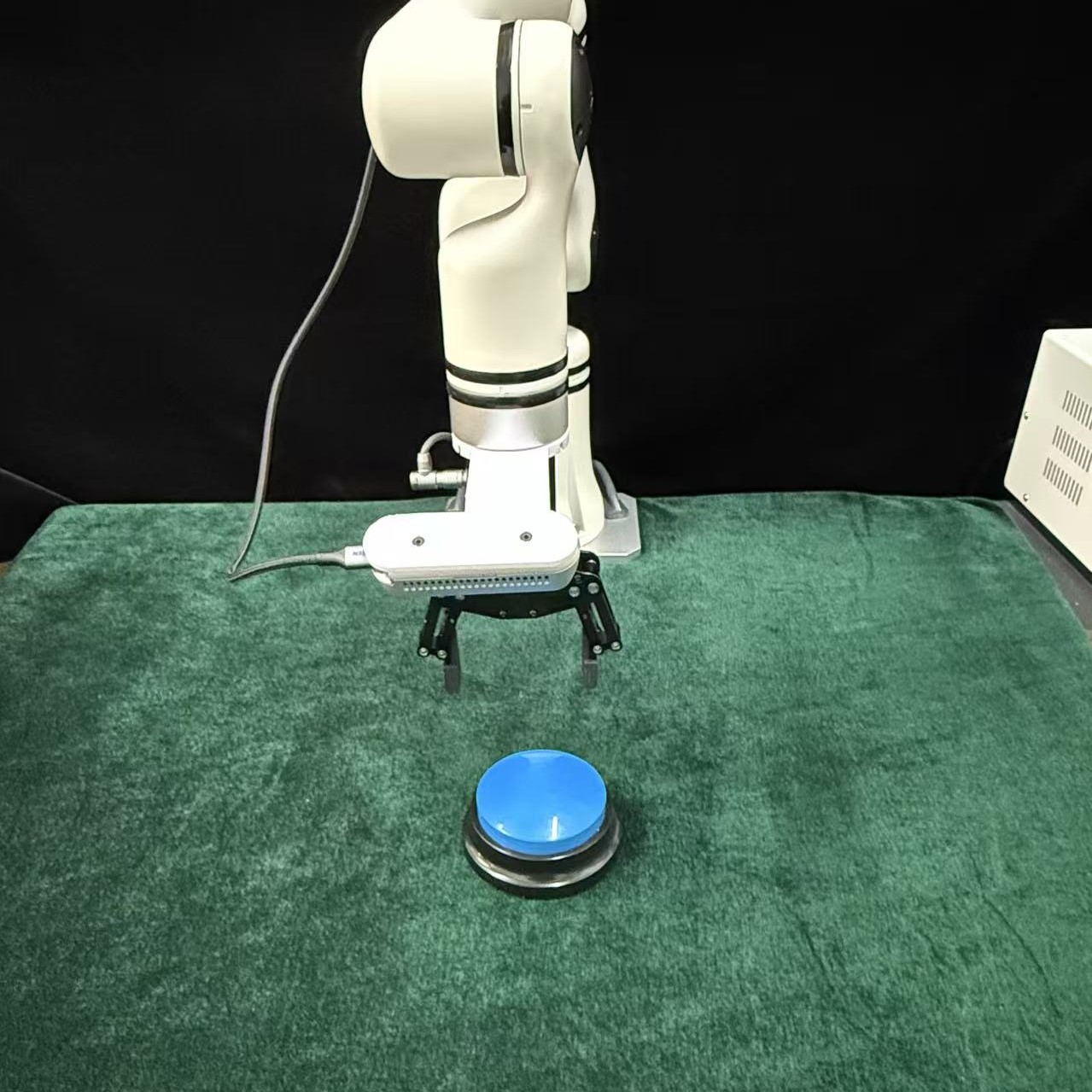} 
\end{minipage}

\begin{minipage}[c]{0.86\textwidth}
\begin{itemize}
    \item \texttt{Push Cube(Medium)}: The agent must push a target cube from a randomized starting position to a designated goal region marked on the table. The reward is binary, provided only when the object successfully enters the target zone. This task challenges the agent's ability to reason about object dynamics and contact physics.
\end{itemize}
\end{minipage}
\hfill
\begin{minipage}[c]{0.1\textwidth}
    \centering
    \includegraphics[width=\linewidth]{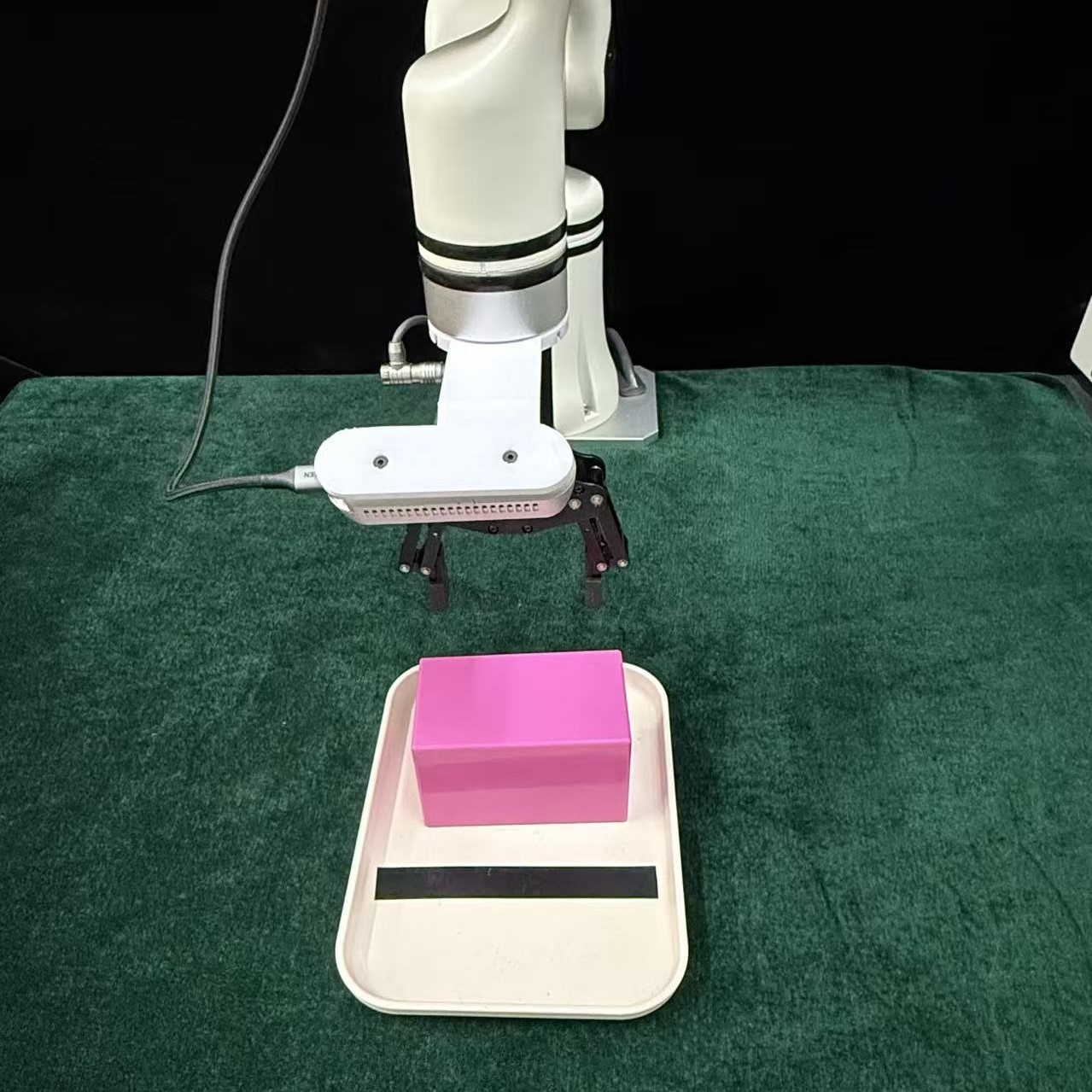}
\end{minipage}

\begin{minipage}[c]{0.86\textwidth}
\begin{itemize}
    \item \texttt{Grasp Cube(Hard)}: The objective is to pick up a cube from the table. A success reward is granted only when the object is lifted to a predefined height threshold. This is the most challenging task as it involves precise grasping, stability control during lifting, and handling potential slippage.
\end{itemize}
\end{minipage}
\hfill
\begin{minipage}[c]{0.1\textwidth}
    \centering
    \includegraphics[width=\linewidth]{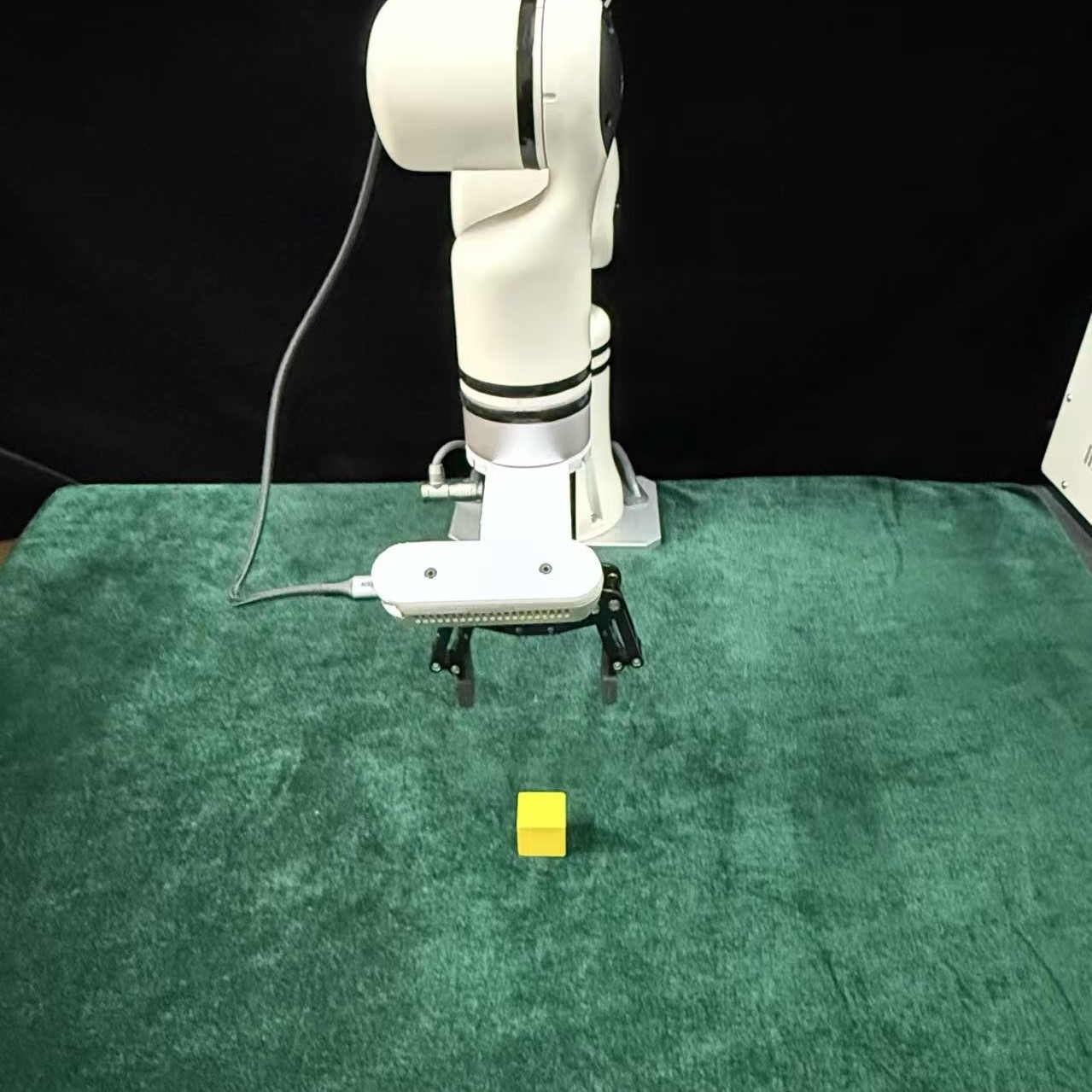}
\end{minipage}

\paragraph{Reset Mechanism.} At the beginning of each episode, the robotic arm moves to a fixed reset pose. Subsequently, the task scene is manually restored by a human operator. To ensure the robustness of the learned policy, the operator randomizes the object's initial position and orientation within a predefined workspace during each reset.

\paragraph{Reward Function.} Due to the absence of ground-truth state information in the real world, we rely on sparse reward signals to indicate task completion. Generally, there are two common approaches to obtain these signals: training a visual binary classifier to predict success probabilities, or employing manual human labeling. In this work, we adopt the latter approach, where a human operator verifies the task success status during policy execution.

\paragraph{Training Process.} We adopt the HIL-SERL \cite{hil-serl} paradigm for real-world training, implemented based on the official LeRobot codebase\footnote{\url{https://huggingface.co/docs/lerobot/hilserl}}. A key component of HIL-SERL is the active human intervention mechanism. During the agent's autonomous rollout, a human operator monitors the execution and can instantaneously intervene via the SpaceMouse to correct deviations or guide the agent out of unpromising states, thereby facilitating efficient exploration. 

Instead of a single replay buffer, HIL-SERL maintain two distinct replay buffers to manage the data effectively: an \textit{online RL buffer} that stores transitions from the agent's autonomous policies, and a \textit{demonstration buffer} that houses pre-collected expert demonstrations and human intervention data. For policy updates, we construct training batches by sampling from both buffers simultaneously with a \textbf{1:1} ratio. This balanced sampling strategy allows the agent to learn robustly from a diverse distribution of expert guidance and self-generated experiences.

\paragraph{Hyperparameters.} We provide the detailed hyperparameters for the environment setup and algorithm training in Table \ref{tab:real_world_hyperparams}.

\begin{table}[h]
\centering
\caption{Hyperparameters for Real-Robot Experiments.}
\setlength{\tabcolsep}{12pt}
\begin{tabular}{ll}
\midrule 
\textbf{Hyperparameter} & \textbf{Value} \\ 
\midrule 
\textbf{\textcolor{blue}{\underline{Environment \& General}}} & \\
Image resolution & $128 \times 128$ \\
Proprioceptive state dim & 4 \\
Control frequency & 10 Hz \\
Action dimension & 4 \\
Batch size & 256 \\
Total steps & 15,000 \\ 
\midrule 
\textbf{\textcolor{blue}{\underline{SAC}}} & \\
Actor/Critic Learning rate & $3 \times 10^{-4}$ \\
Discount factor $\gamma$ & 0.97 \\
UTD ratio & 2 \\
Hidden dimension & 256 \\
Target smoothing $\tau$ & 0.005 \\
Online buffer capacity & 30,000 \\ 
\midrule 
\textbf{\textcolor{blue}{\underline{MoDem (SLOPE)}}} & \\
Planning horizon $H$ & 5 \\
CEM iterations & 6 \\
Q ensemble size & 5 \\
Latent dimension & 50 \\
MLP dimension & 512 \\
Discount factor $\gamma$ & 0.99 \\
Consistency coefficient & 20.0 \\
Value coefficient & 0.1 \\
Pi coefficient & 0.5 \\
Temporal decay coefficient & 0.5 \\
Target model momentum & 0.995 \\
Smoothing factor $\alpha$ & 0.4 \\
\midrule 
\end{tabular}
\label{tab:real_world_hyperparams}
\end{table}

\clearpage

\section{Additional Experimental Results}
\label{sec:additional_results}
\subsection{Visualization and Analysis of Reward Landscapes}
We visualize the reward landscapes of TD-MPC2, MoDem, and SLOPE separately during rollout. As shown in Fig.~\ref{fig:reward_landscape}, TD-MPC2 yields an almost flat reward landscape, indicating that its reward model fails to provide meaningful guidance for policy evaluation. Compared to MoDem, SLOPE generates a more discriminative reward landscape, while MoDem remains relatively flat. This suggests that SLOPE offers clearer optimization signals, enabling more effective credit assignment and policy improvement under sparse rewards.

\begin{figure}[h!]
  \centering
  \begin{subfigure}[b]{0.3\textwidth}
    \centering
    \includegraphics[width=\textwidth]{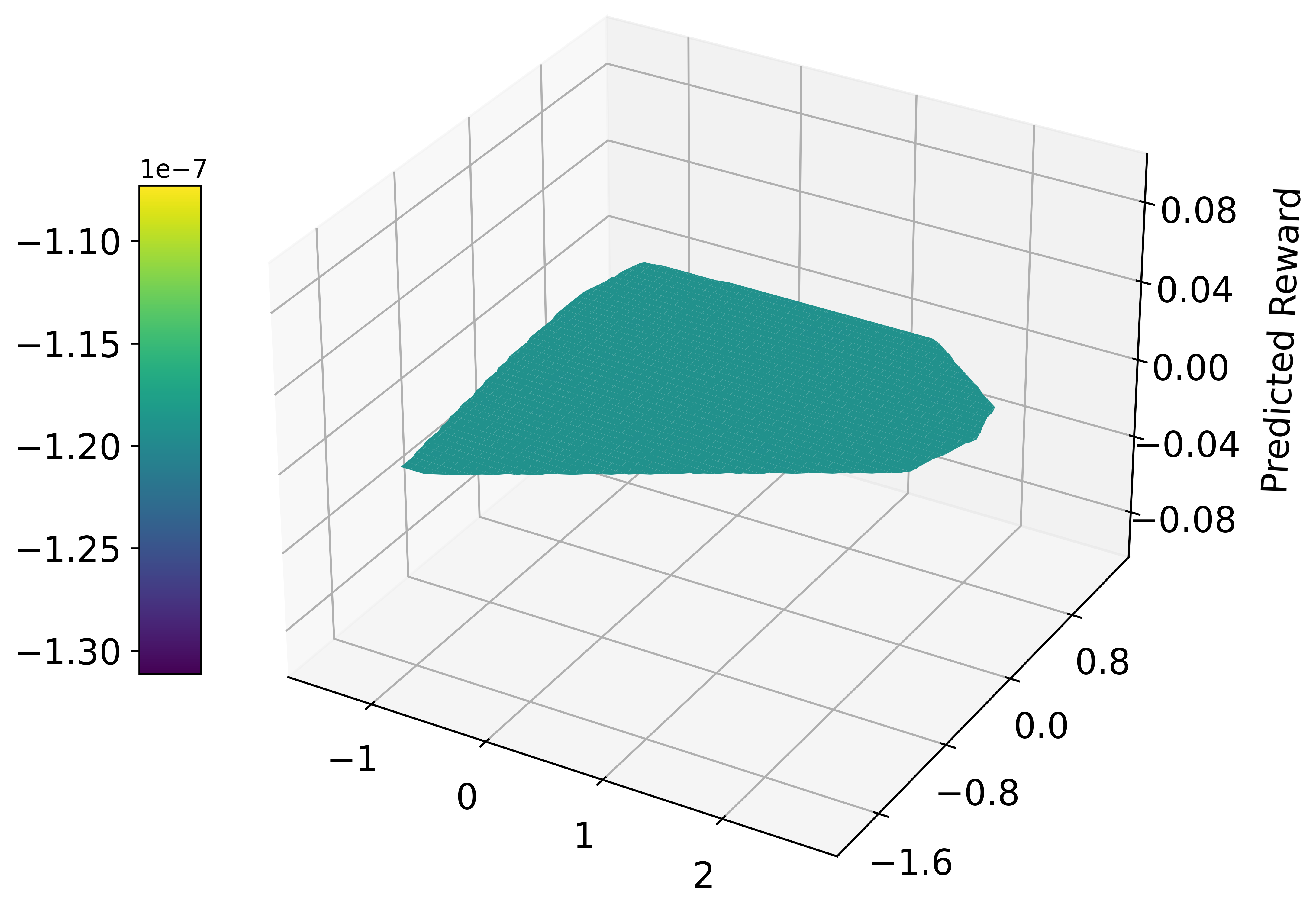}
    \caption{TD-MPC2}
    \label{fig:tdmpc2}
  \end{subfigure}
  \hfill
  \begin{subfigure}[b]{0.3\textwidth}
    \centering
    \includegraphics[width=\textwidth]{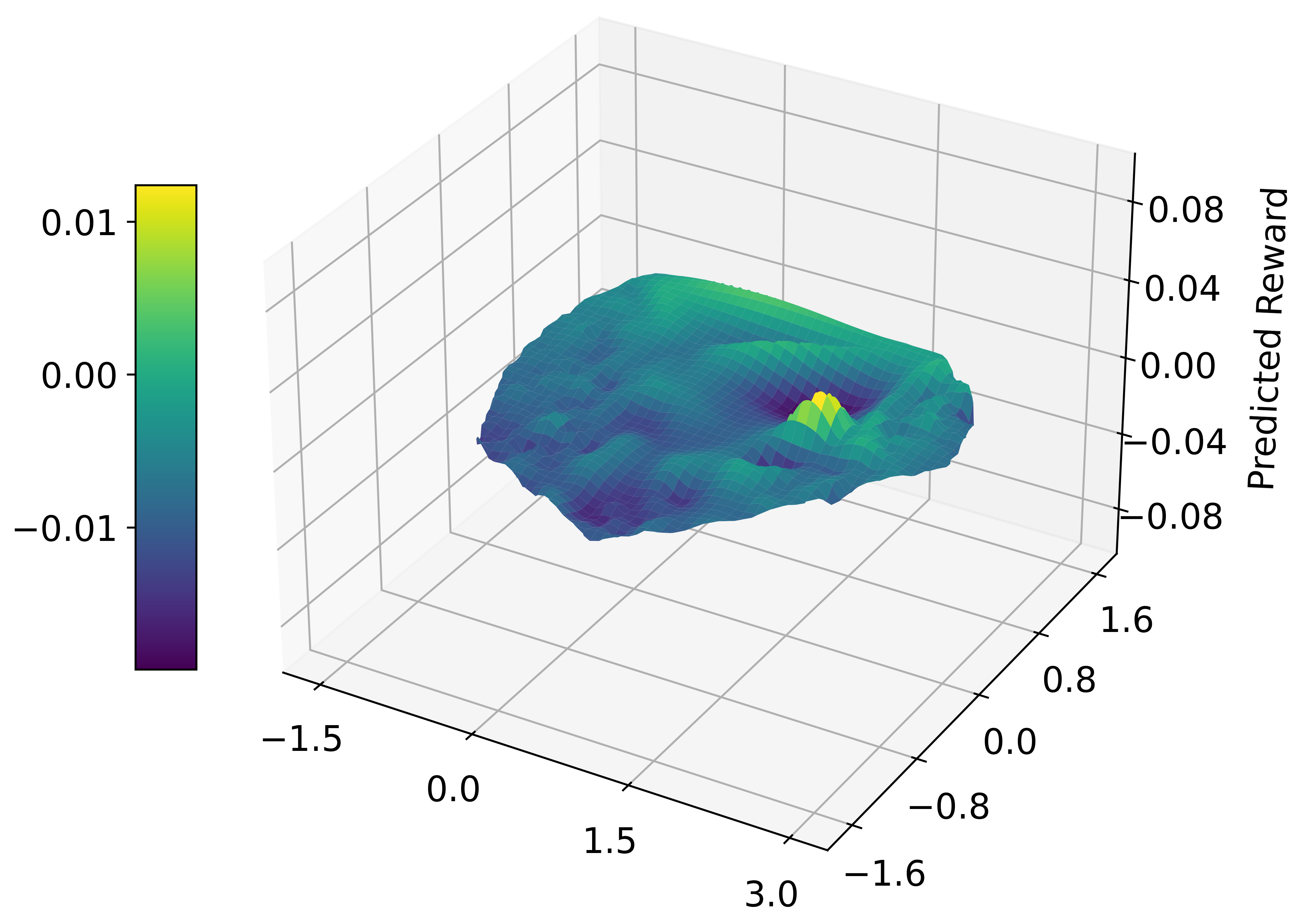}
    \caption{MoDem}
    \label{fig:modem}
  \end{subfigure}
  \hfill
  \begin{subfigure}[b]{0.3\textwidth}
    \centering
    \includegraphics[width=\textwidth]{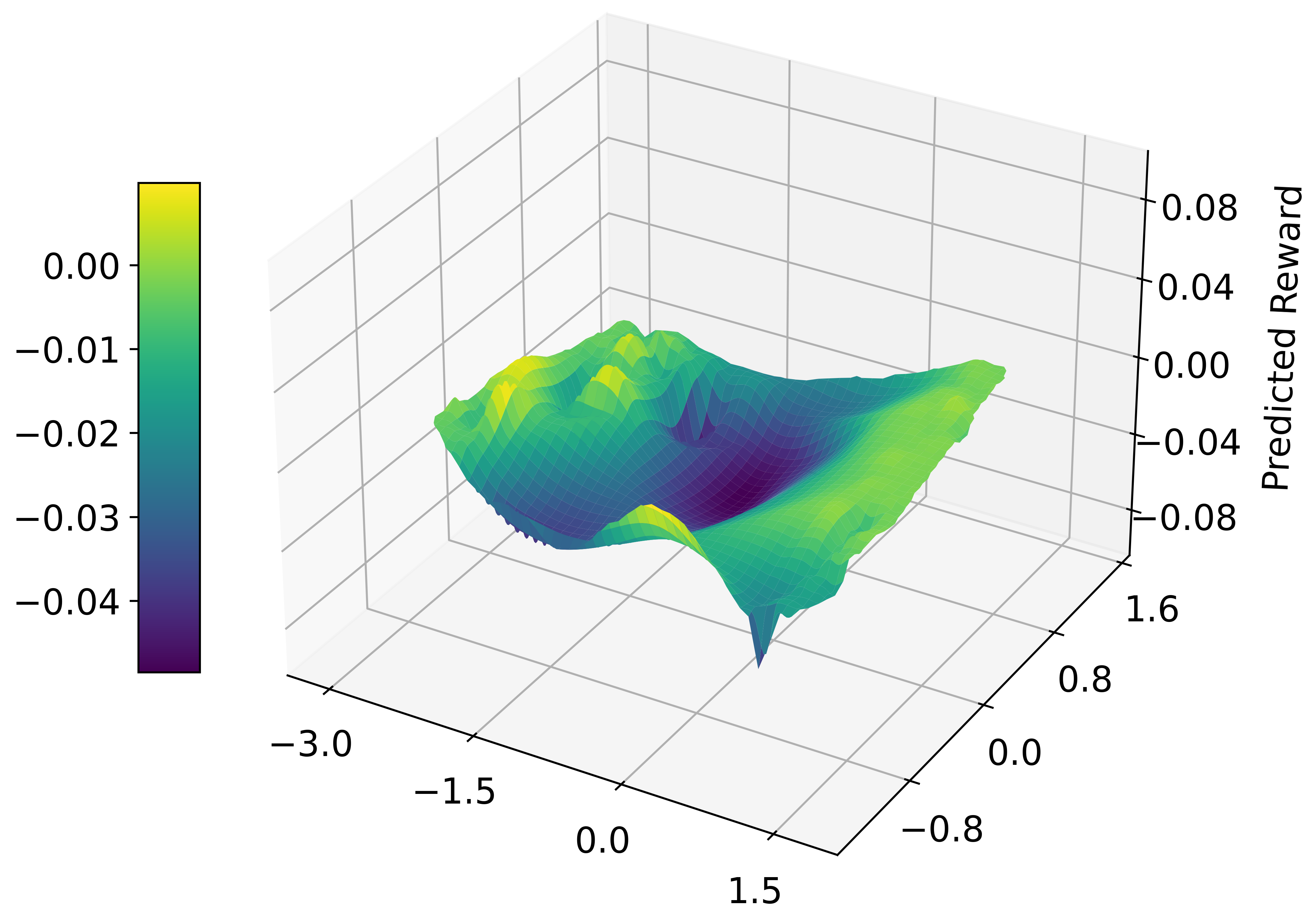}
    \caption{\textbf{SLOPE(Ours)}}
    \label{fig:slope}
  \end{subfigure}
  \caption{Visualization of reward landscape. We collect trajectory data during rollout as the basis for visualization. All methods are evaluated on the same set of states, using their respective reward models to predict rewards. The high-dimensional state representations are projected into a 2D space using Principal Component Analysis (PCA), and predicted rewards are plotted over a mesh grid to construct the reward surfaces. To avoid distortion from extreme values caused by successful states (which yield disproportionately high predicted rewards), we exclude them and visualize only states with ground-truth rewards of zero.}
  \label{fig:reward_landscape}
\end{figure}

\subsection{Visualization of Shaped Reward Evolution in GridWorld}

To further investigate the mechanism of our Bootstrapped Potential Construction, we visualize the temporal evolution of the shaped reward landscape during the training process on the $10 \times 10$ GridWorld task introduced in Section \ref{sec:pbrs}. Fig.\ref{fig:toy_example_iter} displays the heatmaps of the shaped reward $\widetilde{r}(\mathbf{s}, \mathbf{a})$ at varying training iterations (iteration 10, 20, 30, 40, and 100).

As shown in Fig.\ref{fig:toy_example_iter}, in the early training stage (iteration 10), high rewards are limited to the area around the goal state (bottom-right corner). The rest of the state space provides little information, similar to the original sparse reward setting. However, as training continues (from iteration 20 to 40), the reward signal propagates backward from the goal towards the start state. By iteration 100, the reward landscape extends across the entire state space. Meanwhile, the color transition becomes smoother, establishing a clear directional trend that effectively guides the agent towards the goal.

\begin{figure}[h!]
    \centering
    \includegraphics[width=1.0\linewidth]{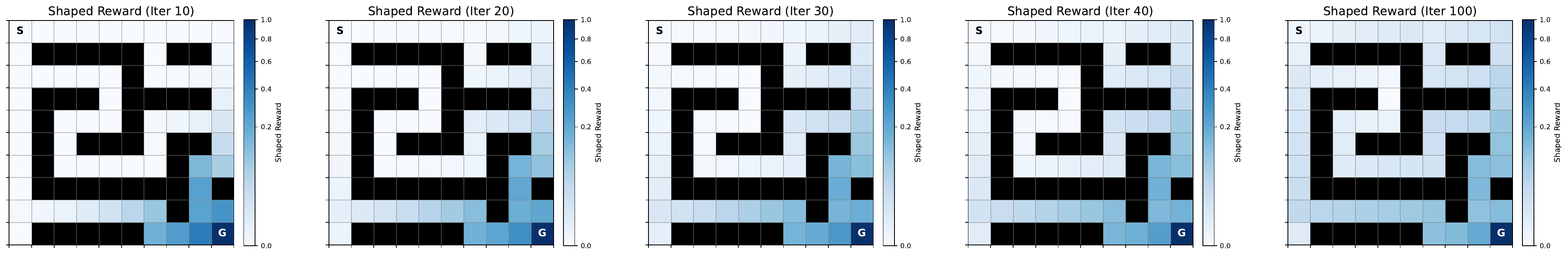}
    \caption{Visualization of the shaped reward landscape evolution on the $10 \times 10$ GridWorld. Darker blue regions indicate higher reward values. As training progresses, the dense reward signal propagates from the goal state (bottom-right) backwards to the initial state (top-left), progressively forming a global gradient field.}
    \label{fig:toy_example_iter}
\end{figure}

\subsection{Detailed Performance on Each Task}
\label{sec:detail_results}
Fig.\ref{fig:overall_results} visualizes the aggregate performance of SLOPE against baseline algorithms on all 20 evaluated tasks. This extensive evaluation covers distinct domains (\textit{ManiSkill3}, \textit{Meta-World}, \textit{RoboSuite}, \textit{Adroit}) and varying reward densities (sparse and semi-sparse). As evidenced by the curves, SLOPE not only converges faster but also exhibits lower variance across different seeds, highlighting its robustness and generalizability in solving diverse manipulation problems.

\begin{figure}[t]
    \centering
    \includegraphics[width=1.0\linewidth]{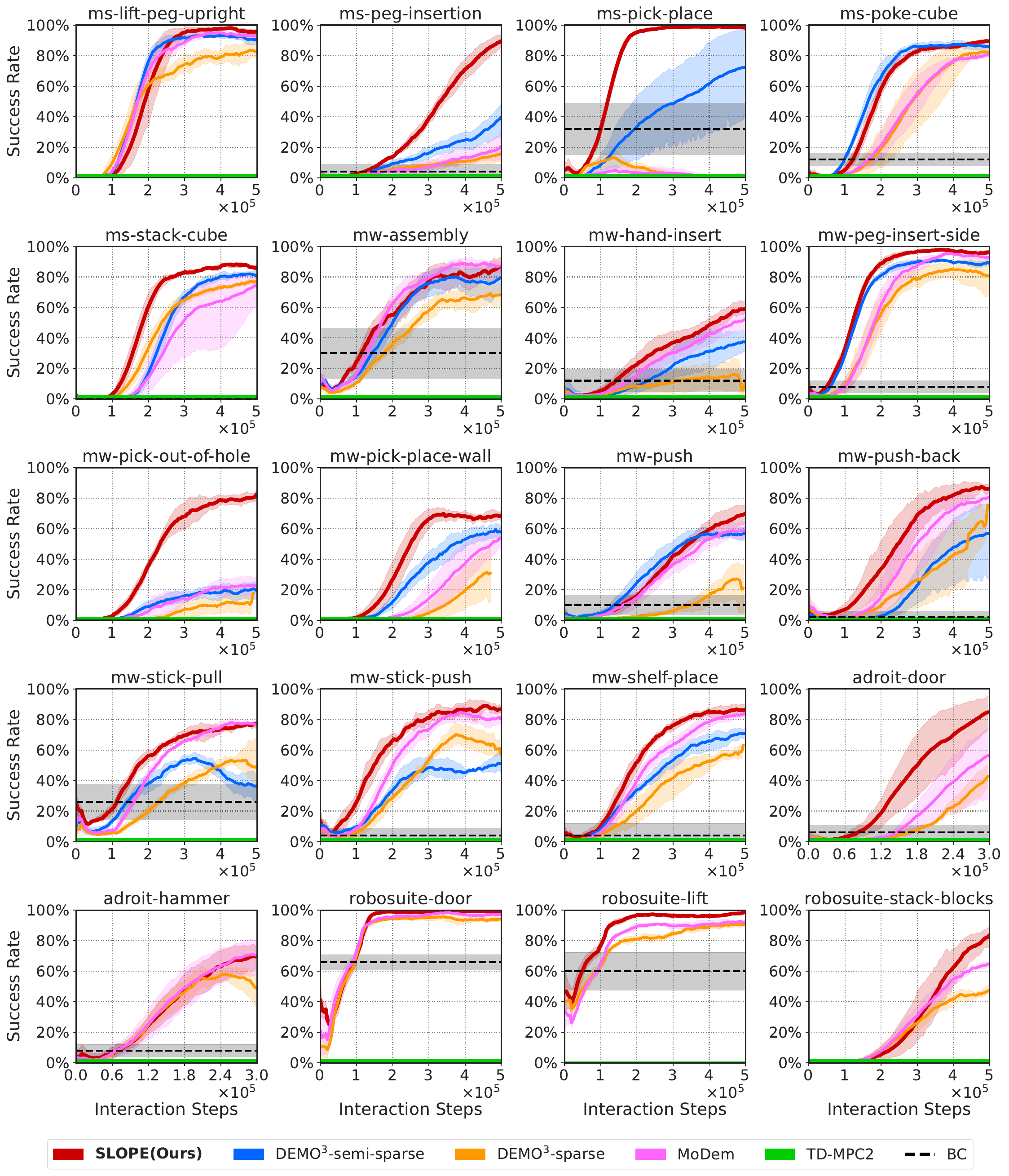}
    \caption{Success rate comparison on 20 tasks. We conduct all experiments using 5 distinct random seeds $\{1, 2, 3, 4, 5\}$.}
    \label{fig:overall_results}
\end{figure}

\clearpage

\subsection{Evaluation on Dense Reward Tasks}
While the primary contribution of SLOPE lies in resolving sparse reward challenges, we further investigate its efficacy in standard dense reward settings to assess its generality. We evaluate SLOPE on the DMC, selecting 8 representative locomotion tasks with state-based inputs. We compare our method against three baselines: \textbf{TD-MPC2} (our backbone algorithm), \textbf{Dreamerv3}, and \textbf{SAC}. The results of SAC, TD-MPC2 and Dreamerv3 are downloaded from this link \footnote{\url{https://github.com/nicklashansen/tdmpc2/tree/main/results}}.

\paragraph{Implementation Details for Dense Rewards.} 
It is worth noting that for these dense reward experiments, we modified the SLOPE configuration. Since the environment provides continuous and informative feedback, the exploration challenge is significantly mitigated compared to sparse settings. Therefore, we disabled the Optimism-Driven Landscape Shaping component and relied solely on the Bootstrapped Potential Construction. In this configuration, SLOPE essentially acts as TD-MPC2 augmented with a dynamic PBRS mechanism. 

\begin{figure}[h!]
    \centering
    \includegraphics[width=1.0\linewidth]{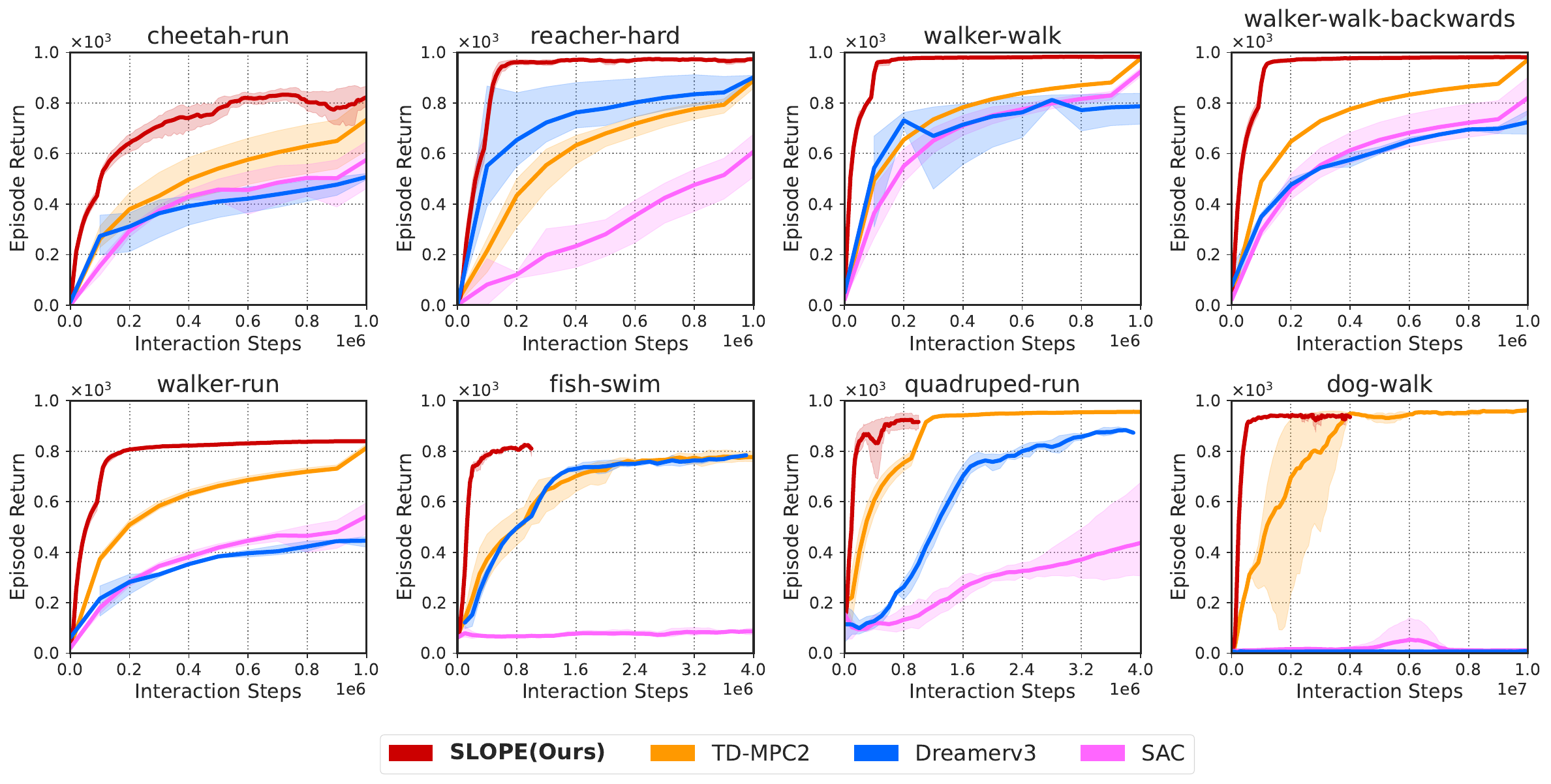}
    \caption{Performance comparison on 8 tasks from the DMC with dense rewards. We compare SLOPE (Ours) against TD-MPC2, Dreamerv3, and SAC. All results are averaged over 5 random seeds, with shaded regions indicating 95\% CIs. SLOPE consistently achieves faster convergence and higher asymptotic performance compared to the baselines, demonstrating that bootstrapped potential shaping accelerates learning even in dense reward settings.}
    \label{fig:dmc_dense}
\end{figure}

\paragraph{Results and Analysis.}
The results on the DMC tasks are illustrated in Fig.\ref{fig:dmc_dense}. First, compared to the backbone TD-MPC2 (orange curves), SLOPE (red curves) demonstrates significantly faster convergence speeds across nearly all tasks. This acceleration suggests that shaping the reward with the learned value function creates a more favorable optimization landscape, allowing the planner to identify high-reward trajectories more efficiently. Furthermore, in terms of asymptotic performance, SLOPE consistently matches or outperforms both the model-free baseline (SAC) and the model-based baseline (DreamerV3). These results collectively confirm that the benefits of Bootstrapped Potential Construction extend beyond sparse reward domains, acting as an effective regularizer that efficiently guides the planner within dense reward landscapes.

\subsection{Generalization to Other MBRL Algorithm: Dreamerv3}

To assess the universality of our framework beyond the specific architecture of TD-MPC2, we integrated SLOPE into Dreamerv3 \citep{dreamerv3}, a leading MBRL algorithm known for its robust performance across diverse domains.

\paragraph{Implementation Details.}
The integration of SLOPE into Dreamerv3 requires a distinct strategy due to the architectural differences in how reward models are utilized.
\begin{itemize}
    \item \textbf{Difference in Backbone:} TD-MPC2 employs the learned reward function primarily for inference-time planning (i.e., evaluating trajectory return via MPPI). In contrast, Dreamerv3 decouples representation learning from Actor/Critic learning. The world model learns to predict rewards to shape the latent representation, while the Actor and Critic learn behaviors via latent imagination.
    \item \textbf{Integration Strategy:} To preserve the fidelity of the world model's representation learning, we do not alter the training target of the reward predictor in the world model. The reward predictor continues to learn the ground-truth environmental reward $r(\mathbf{s}, \mathbf{a})$. Instead, we intervene during the Critic learning phase. Specifically, during the latent imagination rollouts used to calculate the value targets, we augment the predicted rewards $\hat{r}_t$ with our potential-based shaping term:
    \begin{equation}
        \tilde{r}_t = \hat{r}_t + \gamma \Phi(\hat{\mathbf{s}}_{t+1}) - \Phi(\hat{\mathbf{s}}_t),
    \end{equation}
    where $\Phi(\cdot)$ is the bootstrapped potential. This ensures that the Critic learns a value function that incorporates global directional guidance, subsequently guiding the Actor, while the World Model remains an accurate simulator of the environment.
\end{itemize}

\paragraph{Results and Analysis.}
We evaluated this integration on two sparse-reward tasks from the \textit{Meta-World} benchmark: \texttt{Coffee-Push} and \texttt{Window-Open}. As shown in Figure \ref{fig:dreamer}, Dreamerv3 combined with SLOPE (Orange curves) significantly outperforms the vanilla Dreamerv3 (Blue curves). These results confirm that the benefits of potential landscape construction are algorithm-agnostic and can also effectively enhance policy optimization in decoupled MBRL architectures.

\begin{figure}[t!]
    \centering
    \includegraphics[width=0.7\linewidth]{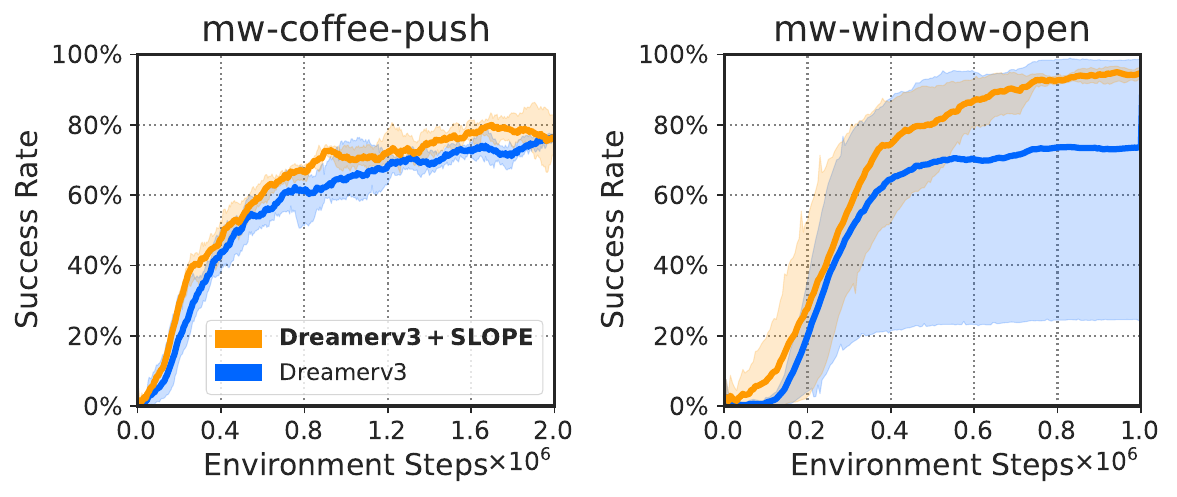}
    \caption{Performance comparison between Dreamerv3 + SLOPE and vanilla Dreamerv3 on \textit{Meta-World} sparse reward tasks. By injecting the potential-based shaping signal into the critic's target computation during latent imagination, SLOPE significantly accelerates learning and improves final performance compared to the strong baseline.}
    \label{fig:dreamer}
\end{figure}

\clearpage



\subsection{Standalone Performance in Zero-Demonstration Settings}
To provide a more comprehensive evaluation of SLOPE's foundational capabilities, we assess its performance in the absence of expert demonstrations within fully sparse environments. As shown in Fig.~\ref{fig:wo_demos}, SLOPE effectively learns from scratch and achieves task success without any prior guidance. In contrast, the baseline TD-MPC2 completely fails to overcome the severe exploration bottleneck, consistently yielding near-zero returns.

Despite this standalone effectiveness, we emphasize the highly complementary relationship between our shaping framework and expert demonstrations. While demonstrations provide critical high-value initial states, SLOPE structurally propagates these localized signals into a dense, informative landscape. This synergy maximizes the utility of limited data, significantly enhancing sample efficiency for real-world applications where unguided exploration is prohibitively expensive.

\begin{figure}[h!]
    \centering
    \includegraphics[width=0.95\linewidth]{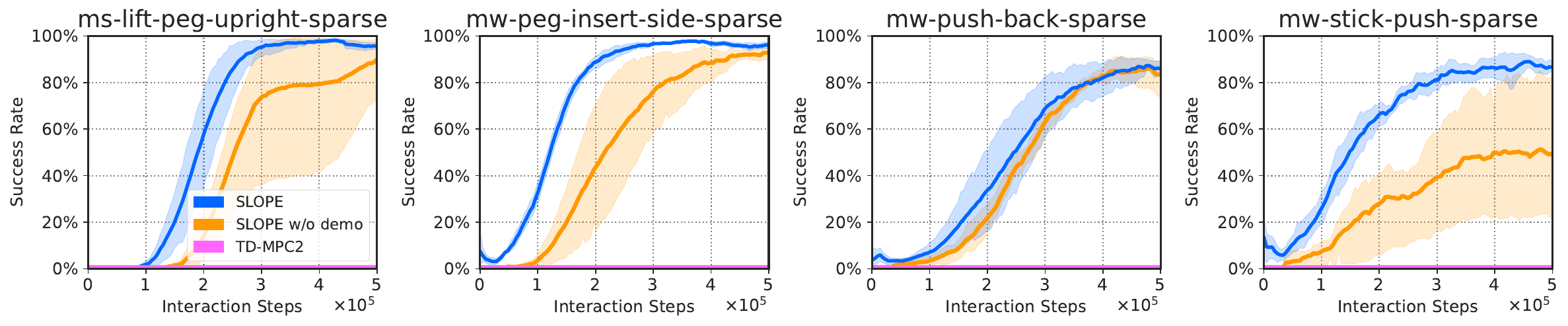}
    \caption{Performance of SLOPE without demonstrations in fully sparse reward settings.}
    \label{fig:wo_demos}
\end{figure}

\subsection{Performance on Semi-Sparse Tasks}
We also evaluate the performance of SLOPE on semi-sparse tasks, where sparse-reward environments are augmented with handcrafted stage rewards through manual task decomposition. As shown in Fig.~\ref{fig:semi_rs} (Left), SLOPE achieves significantly better performance in the semi-sparse setting compared to the fully sparse rewards. This improvement demonstrates that while SLOPE is not tailored for semi-sparse tasks, it still benefits from the structured rewards and achieves competitive results.

\subsection{Comparison with Other Reward Shaping Methods}
We compare SLOPE with two alternative reward shaping methods: similarity-based and entropy-based potentials, both designed to encourage exploration. The similarity-based potential $\Phi _{\text{sim}}\left( \mathbf{s} \right) =\lVert \mathbf{z'}-\texttt{stop\_grad}\left( h_{\theta}\left( \mathbf{s'} \right) \right) \rVert _{2}^{2}$ drives the agent toward novel states, while the entropy-based potential $\Phi_{\text{entroy}}(\mathbf{s})=\mathcal{H}\left( \pi(\cdot|\mathbf{s}) \right)$ promotes diverse behavior through higher policy entropy.

As shown in Figure~\ref{fig:semi_rs} (Right), both methods achieve reasonable performance on relatively simple tasks, such as \texttt{ms-poke-cube-sparse}, where exploratory behaviors are easier to reward. However, as the task complexity increases, the performance of both similarity-based and entropy-based shaping methods degrades significantly. In contrast, SLOPE consistently maintains strong performance across these harder tasks. This suggests that in complex sparse-reward environments, general exploration incentives are insufficient, and agents benefit more from reward signals that are explicitly aligned with task progression.

\begin{figure}[h!]
    \centering
    \begin{subfigure}[t]{0.48\linewidth}
        \centering
        \includegraphics[width=\linewidth]{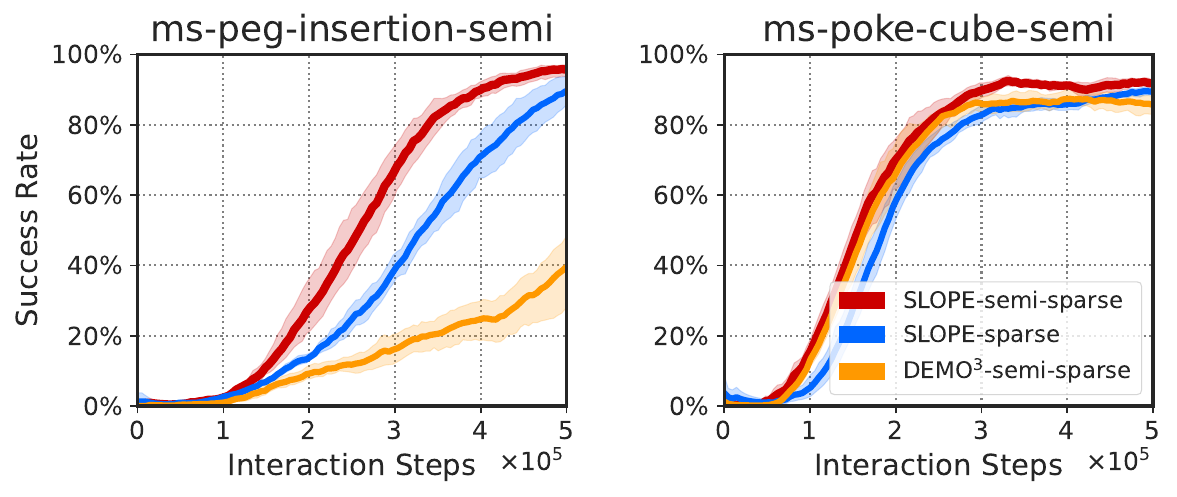}
    \end{subfigure}
    \hfill
    \begin{subfigure}[t]{0.48\linewidth}
        \centering
        \includegraphics[width=\linewidth]{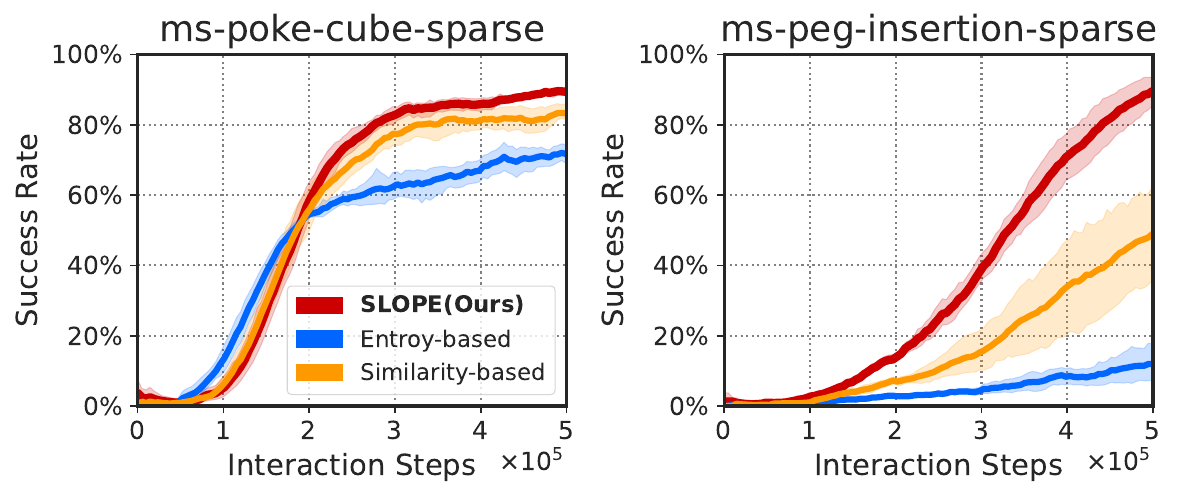}
    \end{subfigure}
   \caption{\textbf{Left}: Performance evaluation of SLOPE in semi-sparse reward settings. \textbf{Right}: Comparison of different reward shaping methods.}
   \vskip -0.15in
    \label{fig:semi_rs}
\end{figure}

\clearpage

\subsection{Comparison with Reward Smoothing: DreamerSmooth}
To provide a more comprehensive evaluation of SLOPE, we compare it against DreamSmooth \cite{dreamsmooth}, a representative MBRL algorithm designed to mitigate reward prediction bottlenecks. Its methodology utilizes temporal relaxation to predict a temporally-smoothed reward instead of the exact scalar at a specific timestep, thereby addressing prediction ambiguity.

\paragraph{Core Implementation.}
Unlike SLOPE, which is designed for purely sparse-reward environments, DreamSmooth is typically applied to tasks with existing dense reward structures augmented by high-magnitude sparse signals at key stage points. We reimplemented this approach by applying a Gaussian smoothing function $f$ to rewards $r$ during replay buffer storage:
\begin{equation}
    \tilde{r}_t \leftarrow f(r_{t-L:t+L}) = \sum_{i=-L}^{L} f_i \cdot r_{t+i} \quad \text{s.t.} \quad \sum f_i = 1
\end{equation}
Our implementation adheres to the following core principles:
\begin{itemize}
    \item \textbf{Temporal Relaxation}: By diffusing the sparse reward signal over a temporal horizon $L$, the model can predict rewards several timesteps earlier or later without incurring large losses.

    \item \textbf{Sum Preservation}: The smoothing function is normalized to ensure $\sum \tilde{r} = \sum r$, preserving policy optimality under the original reward objective.

    \item \textbf{Gaussian Kernel}: We utilize a Gaussian kernel ($\sigma=8$) to provide the planner with a smooth, directional gradient toward the goal, transforming the sparse rewards into dense.
\end{itemize}

\begin{figure}[h!]
    \centering
    \includegraphics[width=0.95\linewidth]{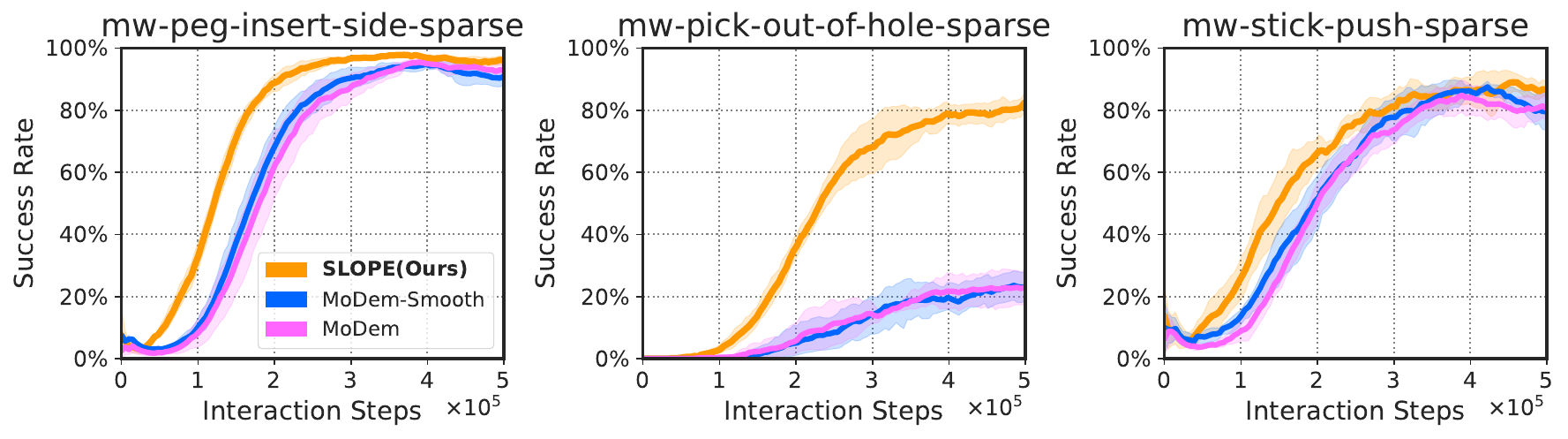}
    \caption{Comparative analysis between SLOPE and Reward Smoothing.}
    \label{fig:smooth}
\end{figure}

\paragraph{Results and Analysis.}
As illustrated in Fig.~\ref{fig:smooth}, SLOPE consistently outperforms the reward smoothing baseline (MoDem-Smooth) across multiple \textit{Meta-World} tasks, achieving both faster convergence and higher asymptotic success rates. Notably, the integration of temporal smoothing into MoDem yields no significant performance gain over the original sparse baseline. While temporal relaxation in DreamSmooth eases the reward prediction bottleneck by diffusing signals over a temporal window $L$, it remains fundamentally limited by the scarcity of the underlying scalar signal. In fully sparse-reward environments, the absence of dense supervision or stage-wise priors means that temporal diffusion merely attenuates the infrequent success signals rather than constructing an informative gradient. Consequently, reward smoothing fails to bridge the exploration gap in challenging tasks like \texttt{pick-out-of-hole}. In contrast, SLOPE's potential-based reward provides precise directional guidance by leveraging structured potential landscapes, offering a more robust planning signal than simple Gaussian diffusion ($\sigma=8$) for complex robotic manipulation.

\clearpage

\subsection{Computational Resources}
To evaluate the efficiency of our approach, we conducted a comparative experiment using the \textit{ManiSkill3} benchmark to assess the computational resource requirements of SLOPE against various baseline algorithms. To ensure a fair comparison of resource usage, these benchmarking experiments were all conducted on a high-performance computing server equipped with dual Intel Xeon Gold 6240 CPUs (totaling 36 cores and 72 threads, 2.60 GHz) and 10 NVIDIA GeForce RTX 2080 Ti GPUs.

\begin{table}[h]
\centering
\vskip -0.1in
\caption{Comparison of training resources for different algorithms. The best score is marked with \textbf{bold}. The direction of the arrow indicates whether the desired metric is large or small.}
\resizebox{\linewidth}{!}{
\begin{tabular}{lcccc}
\midrule
\textbf{Method}      & \textbf{Training time} $\downarrow$ & \textbf{CPU memory}  $\downarrow$& \textbf{Learnable parameters} $\downarrow$ & \textbf{Average success rate $\uparrow$} \\ 
\midrule
\rowcolor{graybg}
TD-MPC2     & \textbf{7.9h}          & \textbf{29.53GB}    & \textbf{5.6M}                 & 0\%                       \\
\rowcolor{graybg}
MoDem       & 9.7h          & 30.48GB    & \textbf{5.6M}                 & 55.0\%                       \\
\rowcolor{graybg}
DEMO3-semi-sparse       & 11.8h         & 30.48GB    & 5.7M                 & 74.0\%                     \\ 
\rowcolor{graybg}
DEMO3-sparse       & 11.8h         & 30.48GB    & 5.7M                 & 54.8\%                     \\ 
\midrule 
\rowcolor{bluebg}
\textbf{SLOPE(Ours)} & 9.9h          & 40.37GB    & \textbf{5.6M}                 & \textbf{92.0\%}                     \\ \midrule
\end{tabular}
}
\vskip -0.1in
\label{table:comput}
\end{table}

As illustrated in Table \ref{table:comput}, although SLOPE requires a higher allocation of CPU memory (40.37 GB) compared to the baseline, this additional resource consumption translates into a substantial gain in performance. Specifically, SLOPE achieves a remarkable average success rate of \textbf{92.0\%}, surpassing the strongest baseline (DEMO$^3$-semi-sparse) by a margin of 18.0\%. The increased memory footprint stems from augmenting the demonstration buffer with successful trajectories encountered during training, a strategy designed to mitigate data imbalance in sparse-reward tasks. Furthermore, despite the increased memory footprint, our method maintains high computational efficiency with a training time of 9.9 hours, which is comparable to MoDem and notably faster than DEMO$^3$ (11.8 hours).

\clearpage

\subsection{Visualization of Real-World Policy Rollouts}
In this section, we provide the visualization of the learned policies deployed on the real robot. Figure \ref{fig:task_process} illustrates the frame-by-frame execution sequences of our method, SLOPE, across the three sparse-reward manipulation tasks. As shown in the snapshots, the agent effectively controls the robotic arm to approach the target objects and perform precise interactions, demonstrating the robustness of the learned behaviors in real-world scenarios.

\begin{figure}[h!]
    \centering
    \includegraphics[width=1.0\linewidth]{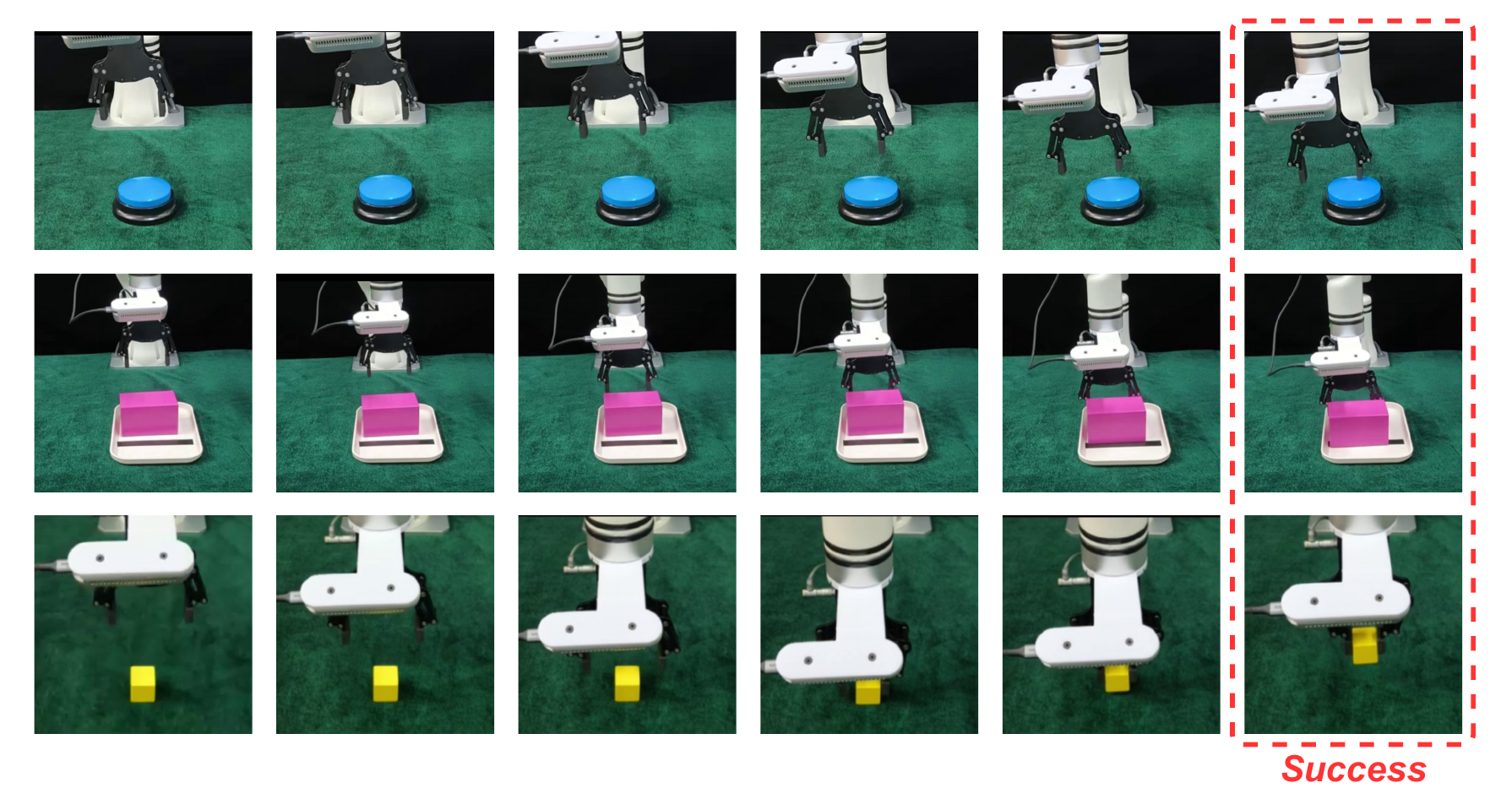}
    \caption{Visualization of successful policy rollouts in the real world. The figure displays chronological snapshots (left to right) of the agent executing the \textbf{Press Button} (top row), \textbf{Push Cube} (middle row), and \textbf{Grasp Cube} (bottom row) tasks. The final column, marked with a red dashed box, highlights the successful completion state for each sparse-reward task.}
    \label{fig:task_process}
\end{figure}

\clearpage
\section{Algorithm}
The pseudocode of our proposed method is presented in Algorithm \ref{alg:slope}.

Our implementation of SLOPE is designed to be backbone-agnostic while carefully adapting to the specific architectural constraints of the underlying algorithm. When integrating with TD-MPC2, which utilizes an ensemble of Q-networks, we adopt an asymmetric aggregation strategy to calculate the shaping term $\gamma \Phi(s') - \Phi(s)$. Specifically, we compute the current potential $\Phi(s)$ using the ensemble average, aligning with TD-MPC2's standard practice for low-variance value estimation. However, for the next state potential $\Phi(s')$, we employ the ensemble minimum. As PBRS is particularly sensitive to the next-state term, where any overestimation in $\Phi(s')$ directly translates into an artificially inflated reward signal, utilizing the minimum operator effectively mitigates overestimation bias. This ensures that the shaping signal reflects reliable potential gains rather than epistemic noise, while maintaining a stable baseline through the averaged current potential.

This strategy generalizes to architectures that employ standard explicit target networks, such as Dreamerv3 or our real-world robotic control frameworks. In these settings, SLOPE computes $\Phi(s)$ using the online network and $\Phi(s')$ using the slow-moving target network, strictly adhering to the stability mechanisms of the respective backbones. This flexibility ensures that SLOPE provides consistent, risk-averse guidance across diverse MBRL implementations, whether they rely on ensemble statistics or frozen target parameters.

\begin{algorithm}[h]
\caption{SLOPE: Shaping Potential Landscapes for MBRL}
\label{alg:slope}
\textbf{Input}: $\mathcal{D}^{\mathrm{demo}}$: demonstration dataset, $\mathcal{B}$: replay buffer (initially empty)\\
\textbf{Parameter}: $\theta$: model parameters, $\pi_{\theta}$: policy, $\Pi_{\theta}$: planner, $\tau$: quantile estimation coefficient, $\eta$: reward shaping weight\\
\textbf{Output}: $\Pi_{\theta}$: planner, $\pi_{\theta}$: trained policy
\begin{algorithmic}[1] 
\STATE \textit{\textcolor{blue}{// Phase 1: Policy Pretraining}}
\FOR{each policy update}
  \STATE Sample $(s_t, a_t) \sim \mathcal{D}^{\mathrm{demo}}$
  \STATE Update $h_{\theta}, \pi_{\theta}$ using BC loss $\mathcal{L}_{\mathrm{BC}}(\theta)$
\ENDFOR
\STATE \textit{\textcolor{blue}{// Phase 2: Model Seeding}}
\FOR{each seeding rollout}
  \STATE Collect rollout $\mathcal{T} \leftarrow \{s_t, a_t, r_t, s_{t+1}\}_{0:T}$ using $a_t \sim \pi_{\theta}(s_t)$
  \STATE Add trajectory to buffer: $\mathcal{D}^{\mathrm{seed}} \leftarrow \mathcal{D}^{\mathrm{seed}} \cup \mathcal{T}$
\ENDFOR
\FOR{each model update}
  \STATE Sample $(s_t, a_t, r_t, s_{t+1})_{0:T}$ from $\mathcal{D}^{\mathrm{demo}} \cup \mathcal{D}^{\mathrm{seed}}$
  \STATE Update $h_{\theta}, d_{\theta}, R_{\theta}, Q_{\theta}, \pi_{\theta}$ using TD-MPC loss $\mathcal{L}_{\mathrm{TD-MPC}}(\theta)$ 
\ENDFOR
\STATE \textit{\textcolor{blue}{// Phase 3: Interactive Learning with SLOPE Shaping}}
\STATE Initialize buffer: $\mathcal{B} \coloneqq \mathcal{D}^{\mathrm{seed}}$
\WHILE{interaction budget not exhausted}
  \STATE Collect rollout $\mathcal{T} \leftarrow \{s_t, a_t, r_t, s_{t+1}\}_{0:T}$ using $a_t \sim \Pi_{\theta}(s_t)$
  \STATE Add trajectory to buffer: $\mathcal{B} \leftarrow \mathcal{B} \cup \mathcal{T}$
  \STATE Sample $(s_t, a_t, r_t, s_{t+1})_{t:t+H}$ from $\mathcal{D}^{\mathrm{demo}} \cup \mathcal{B}$
  \STATE Compute shaped reward: $\tilde{r}_t = r_t + \Delta r_t$, where $\Delta r_t = \gamma \mathbb{E}_{s'}[\Phi(s')] - \Phi(s_t)$
  \STATE Update $h_{\theta}, d_{\theta}, R_{\theta}, Q_{\theta}, \pi_{\theta}$ using TD-MPC loss $\mathcal{L}_{\mathrm{TD-MPC}}(\theta)$ 
\ENDWHILE
\STATE \textbf{return} $\Pi_{\theta}$
\end{algorithmic}
\end{algorithm}

\clearpage

\end{document}